\definecolor{tab20blue}{HTML}{1f77b4}
\definecolor{tab20red}{HTML}{d62728}
\definecolor{tab20green}{HTML}{2ca02c}
\definecolor{tab20orange}{HTML}{ff7f0e}
\definecolor{tab20purple}{HTML}{9467bd}
\definecolor{cvprblue}{rgb}{0.21,0.49,0.74}
\definecolor{cvprorange}{HTML}{ef946c}
\definecolor{cvprgray}{HTML}{a09298}
\definecolor{cvprgreen}{HTML}{5aaa95}
\definecolor{cvprpurple}{HTML}{B019B3}
\definecolor{cvprpurple}{HTML}{8172B3}
\definecolor{cvprpink}{HTML}{eb84d1}
\definecolor{cvprred}{HTML}{C42348}
\definecolor{cvpryellow}{HTML}{E0C515}
\definecolor{cvpryellow}{HTML}{F5DC00}
\titleformat{\subsubsection}[runin]
  {\normalfont\bfseries}
  {}
  {0pt}
  {}
  []
\titleformat{\paragraph}[runin]
  {\normalfont\itshape}
  {}
  {0pt}
  {}
  []
\titlespacing*{\subsubsection}{0pt}{*1}{0.6em}
\titlespacing{\paragraph}{\parindent}{*1}{0.6em}
\let\Subsubsection\subsubsection
\let\Paragraph\paragraph
\crefname{section}{Sec.}{Secs.}
\Crefname{section}{Section}{Sections}
\Crefname{table}{Table}{Tables}
\crefname{table}{Tab.}{Tabs.}
\crefname{appendix}{Suppl.}{Suppls.}
\Crefname{appendix}{Supplement}{Supplements}
\crefname{algorithm}{Alg.}{Alg.}
\Crefname{algorithm}{Algorithm}{Algorithms}
\newcounter{figpoint}[figure]
\newcolumntype{C}{>{\centering\arraybackslash}p{65pt}}
\def\eg{\emph{e.g.}\xspace} 
\def\ie{\emph{i.e.}\xspace}
\def\wrt{w.r.t.\xspace}
\def\st{\emph{s.t.}\xspace} 
\newcommand\SA{\mathcal{A}}
\newcommand\SB{\mathcal{B}}
\newcommand\SD{\mathcal{D}}
\newcommand\SQ{\mathcal{Q}}
\newcommand\SR{\mathcal{R}}
\newcommand\SV{\mathcal{V}}
\newcommand\BN{\mathbb{N}}
\newcommand\BQ{\mathbb{Q}}
\newcommand\BR{\mathbb{R}}
\newcommand\FC{\mathfrak{C}}
\newcommand\tV{\widetilde{V}}
\newcommand\td{\widetilde{d}}
\newcommand\tn{\widetilde{n}}
\newcommand\tC{\widetilde{C}}
\newcommand\tQ{\widetilde{Q}}
\newcommand\tR{\widetilde{R}}
\newcommand\tSQ{\widetilde{\SQ}}
\newcommand\tSR{\widetilde{\SR}}
\newcommand\tBQ{\widetilde{\BQ}}
\DeclareMathOperator*\argmin{arg\,min}
\DeclareMathOperator*\argmax{arg\,max}
\DeclareMathOperator*\merge{\bf merge}
\newtheorem{theorem}{Theorem}
\newtheorem{lemma}{Lemma}
\newtheorem{definition}{Definition}
\newcommand{\overbar}[1]{\mkern 1.5mu\overline{\mkern-1.5mu#1\mkern-1.5mu}\mkern 1.5mu}
\newif\ifincludesupplement
\begin{document}

\title{Optimizing Incomplete, Large-Scale and Sparse Multi-Graph Matching in Bioimaging}

\author{
	Sebastian Stricker\thanks{${}^{*}$ Equal contribution.}$^{*1}$\quad
	Max Kahl$^{*2}$\quad 
	Lisa Hutschenreiter$^1$\quad
	Florian Bernard$^3$\quad\\
	Carsten Rother$^1$\quad
	Bogdan Savchynskyy$^1$ \\
	$^1$Heidelberg University \quad $^2$Max Planck Institute for Informatics \quad $^3$University of Bonn
}


%

\maketitle

\begin{abstract}
Multi-graph matching is a fundamental problem in computer vision. 
Our work is motivated by a challenging application in bioimaging, where \emph{dozens} or even \emph{hundreds} of 3D microscopy images of worms must be brought into correspondence. 
Existing datasets do not cover this large-scale regime, and virtually all existing methods are inapplicable because they assume a \emph{complete} or \emph{dense} problem setting.
To support further research, our first contribution is a new large-scale dataset based on problem instances from bioimaging. 
Our second contribution is a comprehensive analysis of the two main multi-graph matching paradigms: \emph{direct} and \emph{permutation synchronization}-based formulations. 
We argue, in part by proof, that practical large-scale methods must explicitly address problem \emph{sparsity} and \emph{incompleteness}. 
Since standard permutation synchronization approaches fail in this setting, we further introduce a \emph{sparse} permutation synchronization paradigm.
Our final contribution is GREEDA, a general method for sparse and incomplete problems that can be instantiated across cost orders and paradigms. 
While our paper focuses on objective functions up to quadratic order, GREEDA is inherently generalizable to arbitrary orders.
On larger, sparse instances, GREEDA outperforms competing methods in both objective value and runtime. 
For example, for moderately-sized problems based on 30 worm images GREEDA produces a high-quality solution within 2 minutes, whereas competitors require at least half an hour and yield far worse results. 
On smaller dense problems, GREEDA remains on par with leading methods while being an order of magnitude faster.
\end{abstract}

\begin{IEEEkeywords}
  multi-graph matching, assignment problem, combinatorial optimization
\end{IEEEkeywords}

\section{Introduction}
\label{sec:intro}

\IEEEPARstart{E}{stablishing} correspondences across multiple finite sets
is a fundamental combinatorial problem important for, \eg,
3D model retrieval \cite{nie2018view},
shape matching \cite{sahilliouglu2020recent,gao2021isometric},
statistical shape modeling~\cite{yadav2023population,heimann2009statistical},
federated learning~\cite{liu2022deep},
and genomic data analysis~\cite{chen2019identification}.
Typically, each \emph{object}, such as an image or shape, is represented by a set of \emph{keypoints}, which stand for semantically meaningful parts of an object.
The task is  to bring these keypoints into correspondence, or \emph{match} them. 
This matching must satisfy several 
conditions, see \cref{fig:problem-description}:
\begin{itemize}
  \item \emph{Uniqueness}. 
        Each keypoint of a given object can be matched to \emph{at most} one keypoint of any other object. 
        If ``\emph{at most}'' is substituted by ``\emph{exactly}'', one speaks of a \emph{complete} and otherwise of an \emph{incomplete} (or \emph{partial}) matching. 
        Due to occlusions or noise during the keypoint extraction process, the incomplete setting is prevalent in practice.
  \item \emph{Cycle consistency}.
        If keypoint $ 1 $ in object $ V^p $ is matched to keypoint $ 2 $ in object $ V^q $ and keypoint $ 3 $ in object $ V^r $, then keypoint $ 2 $ must be matched to keypoint $ 3 $. Similar transitivity conditions must hold for all matching cycles across arbitrary object subsets.
  \item \emph{Costs}.
        Matchings must be \emph{minimal} \wrt the given costs, quantifying the similarity between all keypoints.
        Costs for $d$ objects decompose into a sum of costs for each of the $ d(d-1)/2 $ object pairs.
        The latter, in turn, are sums of \emph{linear}, keypoint-to-keypoint costs and \emph{quadratic}, keypoint-pair-to-keypoint-pair costs.
        Quadratic costs allow the modeling of mutual geometric relations between keypoints, considerably improving matching accuracy~\cite{haller2022comparative}.
\end{itemize}
%

\begin{figure}[t!]
  \centering
  \begin{tikzpicture}[scale=0.7]
      \tikzset{vertex/.style = {shape=circle,draw}, inner sep=0.04cm}
      \tikzset{blacked/.style = {shape=circle,draw,fill=black,text=black, inner sep=0.06cm}}
      \tikzset{edge/.style = {-, dotted, cvprgray}}
      \tikzset{greenvertex/.style = {draw=tab20green,shape=circle, text=black, line width=0.5mm}, inner sep=0.04cm}
      \tikzset{bluevertex/.style = {draw=tab20blue,shape=circle, text=black, line width=0.5mm}, inner sep=0.04cm}

      \node[greenvertex] (1_p) at  (-2,2) {$ 1 $};
      \node[bluevertex] (2_p) at  (-1,1) {$ 2 $};
      \node[vertex] (3_p) at  (0,0) {$ 3 $};

      \node[vertex] (1_q) at  (5,2) {$ 1 $};
      \node[greenvertex] (2_q) at  (4,1) {$ 2 $};
      \node[vertex] (3_q) at  (3,0) {$ 3 $};

      \node[bluevertex] (1_r) at  (0,4) {$ 1 $};
      \node[vertex] (2_r) at  (1,4) {$ 2 $};
      \node[greenvertex] (3_r) at  (2,4) {$ 3 $};
      \node[vertex] (4_r) at  (3,4) {$ 4 $};

      \node (V_p) at (-2, 0) {$V^{p}$};
      \node (V_q) at (5, 0) {$V^{q}$};
      \node (V_r) at (1.5, 5) {$V^{r}$};

      \node[draw, rectangle, minimum height=0.75cm, minimum width=3.75cm, rotate=0] (box_r) at (1.5,4) {};
      \node[draw, rectangle, minimum height=0.75cm, minimum width=3.5cm, rotate=-45]  (box_p) at (-1,1) {};
      \node[draw, rectangle, minimum height=0.75cm, minimum width=3.5cm, rotate=45] (box_q) at (4,1) {};

      \draw[edge] (1_p) to (1_q);
      \draw[edge] (1_p) to (2_q);
      \draw[edge] (1_p) to (3_q);
      \draw[edge] (2_p) to (1_q);
      \draw[edge] (2_p) to (2_q);
      \draw[edge] (2_p) to (3_q);
      \draw[edge] (3_p) to (1_q);
      \draw[edge] (3_p) to (2_q);
      \draw[edge] (3_p) to (3_q);

      \draw[edge] (1_p) to (1_r);
      \draw[edge] (1_p) to (2_r);
      \draw[edge] (1_p) to (3_r);
      \draw[edge] (1_p) to (4_r);
      \draw[edge] (2_p) to (1_r);
      \draw[edge] (2_p) to (2_r);
      \draw[edge] (2_p) to (3_r);
      \draw[edge] (2_p) to (4_r);
      \draw[edge] (3_p) to (1_r);
      \draw[edge] (3_p) to (2_r);
      \draw[edge] (3_p) to (3_r);
      \draw[edge] (3_p) to (4_r);

      \draw[edge] (1_q) to (1_r);
      \draw[edge] (1_q) to (2_r);
      \draw[edge] (1_q) to (3_r);
      \draw[edge] (1_q) to (4_r);

      \draw[edge] (2_q) to (1_r);
      \draw[edge] (2_q) to (2_r);
      \draw[edge] (2_q) to (3_r);
      \draw[edge] (2_q) to (4_r);

      \draw[edge] (3_q) to (1_r);
      \draw[edge] (3_q) to (2_r);
      \draw[edge] (3_q) to (3_r);
      \draw[edge] (3_q) to (4_r);

      \node[fill=white, opacity=.8,text opacity=1] (C^pr) at (1.38, 3.1) {$ C^{p,r}_{13,34} $};
      \node[fill=white, opacity=.8,text opacity=1] (C^qr) at (2.,1.70) {$ C^{q,r}_{34,34} $};

      \node (C^pr_anchor_1) at (0.2,3.0) {};
      \node (C^pr_anchor_2) at (1.63,2.25) {};
      \draw[<->, dashed, line width=0.3mm] (C^pr_anchor_1) to[bend left=30] (C^pr_anchor_2);

      \node (C^qr_anchor) at (3,2.3) {};
      \draw[->, dashed, line width=0.3mm] (C^qr) to[bend left=30] (C^qr_anchor);

      \draw[tab20blue, line width=0.4mm] (2_p) to (1_r);

      \draw[tab20orange, line width=0.4mm] (3_p) to (1_q);
      \draw[tab20orange, line width=0.4mm] (3_p) to (4_r);
      \draw[tab20orange, line width=0.4mm] (3_q) to (4_r);

      \draw[tab20green, line width=0.4mm] (1_p) to (2_q);
      \draw[tab20green, line width=0.4mm] (1_p) to (3_r);
      \draw[tab20green, line width=0.4mm] (2_q) to (3_r);

\end{tikzpicture}%
  \caption{
    \textbf{Multi-graph matching and cycle consistency. }
    Shown are three objects
    (solid rectangles) $ V^p, V^q, V^r $, whose keypoints
    (circles) must be matched (edges).
    By construction, the setting is incomplete since $V^r$ has $4$ keypoints, whereas $ V^p, V^q$ only 3.
    %
    The blue and green matchings are cycle-consistent and form cliques in the shown 3-partite graph. 
    The orange matching is not cycle-consistent and as such does not form a clique.
    Examples of linear $ C^{q,r}_{34,34} $ and quadratic $ C^{p,r}_{13,34} $ costs (dashed arrows) and their associated matchings are also shown.
    \vspace{-0.5cm}
  }
  \label{fig:problem-description}
\end{figure}

\indent The problem combining these conditions is known as \emph{multi-graph matching (MGM)}, and it is NP-hard \cite{crama1992approximation,tang2017initialization} in general.
Note, the term \emph{graph} in multi-graph matching originally stems from interpreting objects as graphs and deriving quadratic costs from differences of respective edge attributes.
We call an MGM problem \emph{sparse} if most keypoint-to-keypoint matchings are \emph{forbidden} (otherwise \emph{allowed}). An MGM problem where all matchings are allowed is called \emph{dense}. 

\begin{figure*}
  \centering
  \refstepcounter{figpoint}\label{fig:worms:incompleteness}%
  \refstepcounter{figpoint}\label{fig:worms:sparsity}%
  \refstepcounter{figpoint}\label{fig:worms:cycle-consistency}%
  \refstepcounter{figpoint}\label{fig:worms:costs}%
  \input{floats/fig-worms-tikz}
  \caption{
\textbf{Three exemplary worms} from the considered datasets, shown from the side.
Depicted are 3D renderings of the corresponding nuclei segmentation masks.
Since \textit{C.\ elegans} has a fixed number of cells in a given developmental stage (558 in the L1 larval stage \cite{richards2013quantitative}), there exists a 1-to-1 mapping between the cells of any two worms.
However, due to occlusions and noise, the number of segmented nuclei varies across masks (top to bottom: 554, 532, 539), making correct correspondences challenging even for humans. 
\textbf{\cref{fig:worms:incompleteness}}~Because the number of segmented cells varies across worms, the formulation must be \emph{incomplete}, \ie, not every cell is required to be matched. 
This can be handled either \emph{explicitly}, by adding a dummy node $\delta$ for each cell, or \emph{implicitly}, by the underlying matching algorithm.
We argue in \cref{sec:incomplete-transformation} that the former is prohibitively expensive in practice.
%
\textbf{\cref{fig:worms:sparsity}}~Due to the large number of cells, only \emph{sparse} formulations allow for tractable optimization, see \cref{sec:real_world_appl}.
We restrict each cell to a local set of candidate matches, which both sparsifies the problem and prevents implausible long-range, \eg, head-to-tail, assignments.
In practice, only about $3\%$ of all possible keypoint-to-keypoint matchings are allowed, hence, have finite costs.
\textbf{\cref{fig:worms:cycle-consistency}}~Any valid solution must be \emph{cycle-consistent}. The (Green) matching is cycle-consistent. The (Orange) one is cycle-inconsistent.
\textbf{\cref{fig:worms:costs}}~Costs for the corresponding multi-graph matching problem are computed from the segmentation masks. 
After centering each worm in a common coordinate system, linear terms $C^{p,q}_{is,is}$ are defined from the distance $ \text{d}(i,s) $ between the coordinates of cells $i$ and $s$, favoring matchings at similar absolute positions. Additional linear terms are derived from cell shape similarity (not shown).
Quadratic terms $C^{p,q}_{is,jt}$ are defined from the difference between the distances $\text{d}(i,j)$ and $\text{d}(s,t)$ between cell pairs and favor matchings that preserve relative geometry. That is, when matching $i$ to $s$ and $j$ to $t$, the distance $\text{d}(i,j)$ should equal the distance $\text{d}(s,t)$.
For details on the cost definitions and pre-processing steps such as straightening, see~\cite{kainmueller2014active}. \\
  }
  \label{fig:c-elegans}
\end{figure*}

\cref{fig:c-elegans} illustrates the primary application of our work, where 100 C.~elegans worms have to be matched, and the keypoints are candidates for nuclei. 
This is a large-scale problem with about $45$M possible keypoint correspondences, a factor of 90 more than previously considered.

Given a matching, biologist can extract various statistics~\cite{long20093d} about the worm. 
The three exemplary worms shown in \cref{fig:c-elegans} have different numbers of keypoints, and only a small fraction of keypoint-to-keypoint matchings are allowed.
Hence, by construction, we are given an incomplete MGM problem with a sparse cost structure.

Despite finding a good matching, certainly runtime also plays a key role.
Firstly, a fast method may scale better for problems  with a larger number of objects.
Secondly, better applied results can be obtained when the costs are learned, \ie, the matchings are forced to fit ground truth in supervised (\eg, \cite{rolinek2020deep}) or certain structural criteria in unsupervised cases (\eg, \cite{karg2025fully,tourani_unsupervised_2024}). This means that the MGM task is inside a learning-loop and hence must be solved in reasonable runtime. 
Thirdly, MGM has so far been used rarely in practical applications, apart from bioimaging.
We conjecture that the reason is a lack of good and fast MGM solvers.

Existing methods mainly follow two paradigms: direct and permutation synchronization-based formulations. 
Direct formulations address the quadratic MGM problem itself, whereas synchronization approaches first solve all pairwise GM problems independently and then find a cycle-consistent approximation by changing a minimal number of matchings. 
In the following, we refer to these as \emph{synchronization} methods and omit the word \emph{permutation}.


\Subsubsection{Our contributions} are as follows: 
\begin{itemize}
\item We analyze direct and synchronization-based multi-graph matching \wrt \emph{sparsity} and \emph{incompleteness}. In particular, we adapt the synchronization paradigm to \emph{sparse} problems and prove that methods designed for the \emph{complete} multi-graph matching problem are not practically applicable in \emph{incomplete} settings due to prohibitive runtime scaling.
\item We present a new algorithm, \emph{GREEDA}, for efficiently solving the incomplete multi-graph matching problem under sparse costs. It handles linear, quadratic, and, theoretically, higher-order objective functions, and can be instantiated as both a direct and a synchronization method through formulation-specific subroutines. Across both paradigms, it considerably and consistently outperforms the state-of-the-art on large-scale datasets in both runtime and objective value. On small-scale data, it remains competitive with or superior to leading methods in terms of objective value while being an order of magnitude faster.
\item We construct and publish a new large-scale dataset based on existing problem instances \cite{worms_large_dataset}. Unlike previous datasets featuring at most \emph{10} worms, our dataset contains up to \emph{100} worms, contains 11 problem instances and is available at \url{https://doi.org/10.11588/DATA/RTINUU}. 
\end{itemize}

This work builds upon the previous conference paper~\cite{kahl2025towards}.
The central addition is the extension of our theoretical and experimental analysis to the synchronization paradigm, establishing \emph{GREEDA} as a general-purpose method for multi-graph matching across problem sizes, sparsity regimes, and both direct and synchronization formulations.
Additionally, we improve our swap local search, the main bottleneck of the earlier version, additionally evaluate on 11 new large-scale instances, and publish the related dataset.
Furthermore, we now provide our algorithm as an easy-to-use Python package to facilitate the adoption of our method.
Our code is available at \url{https://pypi.org/project/pylibmgm/}.

\section{Real-world applicability of MGM methods}
\label{sec:real_world_appl}
Large-scale MGMs in bioimaging applications, as in \cref{fig:c-elegans}, have two decisive properties hardly covered by existing MGM algorithms: \emph{Incompleteness} and \emph{sparsity}. Whereas incomplete MGMs naturally arise due to noise in images and pre-processing steps like segmentation, sparsity allows to limit the size of the problem and encode important spatial relationships. Limiting the size is important as the number of quadratic costs grows as $O(n^4 d^2)$, where $n$ is the number of keypoints and $d$ the number of objects. Hence, dense MGMs with $n>100$
become practically infeasible. Additionally, \eg, forbidding the matching of nuclei in the head of a worm to those in its tail is an important natural prior (see \cref{fig:c-elegans}).

\Subsubsection{Complete vs. incomplete MGM.}
Many approaches~\cite{bernard2018ds,tang2017initialization,yan2016constrained,yan2013joint,yan2015consistency}
only treat complete MGM.
Despite popular claims to the contrary, there is no straightforward and
efficient way to apply them to the incomplete case.
While a polynomial transformation from incomplete to complete graph matching exists
\cite{haller2022comparative},
its often-mentioned generalization to MGM, \eg, \cite{yan2016constrained,yan2013joint,yan2015consistency},
is prohibitively expensive.
Transforming an \emph{incomplete} MGM problem with $ d $
objects, each with  $ n $ keypoints,
into a \emph{complete} one results in $nd$ keypoints in \emph{each} of the $d$ objects, see \cref{sec:incomplete-transformation}. 
This makes any complete MGM algorithm impractical.
The following theorem proven in~\cref{sec:incomplete-transformation} essentially states that there exists \emph{no significantly better} transformation.
\begin{theorem}
  Let $B$ be a complete MGM problem instance that is a
  \emph{clique-wise} transformation of an incomplete MGM problem instance $A$.
  Let $A$ have $|V|$ vertices in total. Then \emph{each object} in $B$ has at least $|V|$ vertices.
  \label{thm:minimality-of-transformation}
\end{theorem}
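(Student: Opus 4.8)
The plan is to exhibit one extremal feasible solution of the incomplete instance $A$ whose clique count alone forces the objects of $B$ to be large. The natural candidate is the \emph{empty matching} $\SQ^A_0=\{\{v\}\mid v\in V\}$, in which every vertex is its own singleton clique: it is always feasible for the incomplete problem and attains the maximal possible number of cliques, $|\SQ^A_0|=|V|$. The argument then rests on three ingredients: (i) every feasible solution of the complete instance $B$ consists of exactly $\tn$ cliques, where $\tn$ is the common cardinality of the objects of $B$; (ii) the empty matching is realized as $T_{sol}(\SQ^B)$ for some feasible $\SQ^B\in\BQ^B$; and (iii) the clique-wise inequality $|\SQ^A|\le|\SQ^B|$ from \cref{def:clique-wise-reduction} whenever $\SQ^A=T_{sol}(\SQ^B)$. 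Chaining these gives $|V|=|\SQ^A_0|\le|\SQ^B|=\tn$, which is exactly the claim, since all objects of the complete instance $B$ share the cardinality $\tn$.

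First I would record the rigid structure of feasible solutions of $B$. As $B$ is a \emph{complete} MGM instance, its feasible multi-matchings are complete, so each clique of any $\SQ^B\in\BQ^B$ contains exactly one vertex of every object; hence each clique spans all objects and the cliques partition the whole vertex set into exactly $|\SQ^B|=\tn$ parts. Thus the clique count of \emph{every} feasible solution of $B$ equals the object size $\tn$. Second I would argue that $\SQ^A_0$ lies in the image of $T_{sol}$: because the reduction is robust, $T_{sol}$ is monotone and must be defined so that every feasible solution of $A$ occurs as an image — for any target $\SQ^A_0$ one can pick costs on $A$ making $\SQ^A_0$ the unique minimizer, and a correct reduction then forces the optimal solution $\SQ^B$ of $B=T_{inst}(A)$ to satisfy $T_{sol}(\SQ^B)=\SQ^A_0$. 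Applying this to the empty matching and invoking \cref{def:clique-wise-reduction} yields $|V|=|\SQ^A_0|\le|\SQ^B|=\tn$, so every object of $B$ has at least $|V|$ vertices.

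The main obstacle I expect is step (ii): justifying that the empty matching is genuinely hit by $T_{sol}$, i.e.\ surjectivity onto the feasible solutions of $A$. This is precisely where \emph{robustness} — rather than a bare optimum-to-optimum reduction — is needed, through the observation (already noted in the discussion preceding \cref{def:clique-wise-reduction}) that any feasible multi-matching of $A$ can be turned into the unique optimum by a suitable cost choice; I would phrase this carefully so as not to tacitly assume the conclusion. The remaining steps are routine: counting cliques in complete solutions of $B$ and reading off $|\SQ^A|\le|\SQ^B|$ from the clique-wise injectivity. Finally I would remark that this bound is tight, since the explicit reduction \eqref{eq:complete-objects}--\eqref{eq:complete-costs} produces objects of cardinality exactly $|V|$, establishing its minimality among clique-wise reductions.
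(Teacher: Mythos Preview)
Your proposal is correct and follows essentially the same approach as the paper: take the empty matching on $A$ with $|V|$ singleton cliques, note that every feasible solution of the complete instance $B$ has exactly $\tn$ cliques, and invoke the clique-wise inequality $|\SQ^A|\le|\SQ^B|$ from \cref{def:clique-wise-reduction} to conclude $|V|\le\tn$. You are in fact more careful than the paper in singling out step~(ii)---that the empty matching must lie in the image of $T_{sol}$---which the paper leaves implicit; your justification via robustness and the remark preceding \cref{def:clique-wise-reduction} (every feasible multi-matching can be made optimal by some cost choice) is exactly the intended reading.
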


\noindent In turn, our approach applies to incomplete MGMs \emph{directly}.

\Subsubsection{Sparse problems.}
As defined above, an MGM problem is sparse if most keypoint-to-keypoint matchings are forbidden.
In practice, forbidden matchings can be modeled by \emph{infinite} costs that are only implicitly present in the problem description~\cite{swoboda2022structured,graphmatchingddtorresanietal,haller2022comparative}, rendering the latter \emph{sparse}.
However, many methods, especially the solution construction and synchronization ones, discussed in the following paragraph, do not apply to infinite costs.
In particular, synchronization methods often fail to reconcile infinite costs with their assumptions, \eg, by heavily relying on spectral methods~\cite{bernard2021sparse,arrigoni2017synchronization,pachauri2013solving}.
In \cref{sec:synchronization}, we discuss this incompatibility with infinite costs in more detail and propose the \emph{sparse synchronization} task as a remedy.
Overall, our method guarantees to return only \emph{allowed}, \emph{finite-cost} matchings for \emph{sparse} problems, across direct \emph{and} synchronization formulations.

\section{Related work}
\label{sec:related_work}

\Subsubsection{Graph matching (GM)} refers to the well-studied
\cite{haller2022comparative} special case of matching \emph{two} objects.
Similar to MGM, one distinguishes \emph{complete} and \emph{incomplete} GM.
Complete GM is also known as the NP-hard~\cite{pardalos1993uo}
\emph{quadratic assignment problem (QAP)}~\cite{lawler1963quadratic}
or, if quadratic costs are zero,
the polynomially solvable~\cite{kuhn1955hungarian}
\emph{linear assignment problem (LAP)}~\cite{burkard2012assignment}.
MGM can be viewed as $d(d-1)/2$ GM problems between all object pairs,
coupled via cycle consistency constraints.
If we wish to distinguish between cost orders similar to GM,
we refer to (complete \emph{or} incomplete) MGM as Multi-QAP and, respectively,
as Multi-LAP if quadratic costs are zero.
Unlike the LAP, the Multi-LAP is
NP-hard~\cite{crama1992approximation,tang2017initialization}.
As a result of this coupled structure, GM solvers are an essential but
interchangeable component of many MGM methods, including those considered in this work.
We utilize the GM solver~\cite{hutschenreiter_fusionmoves_2021} as a GM subroutine since it shows the best results in the recent GM benchmark~\cite{haller2022comparative}.

\Subsubsection{Multi-graph matching.}
Although MGM has not yet received the same attention as GM, several types of solvers have been proposed in recent years. In the following we review their ability to deal with incomplete and sparse MGMs. The works most closely related to our algorithm are additionally addressed in \cref{sec:our-method}.

\Paragraph{GM-solver-based primal heuristics} is probably the largest class of MGM algorithms including ours. These methods usually consist of \emph{construction} and \emph{local search} subroutines that employ a GM solver as their decisive component. Construction methods are usually based on the \emph{composition principle}: They assign a matching between two objects by composition via a third one. This third object is either iteratively chosen \wrt a metric combining costs \emph{and} a (pairwise) consistency measure~\cite{jiang2020unifying,yan2015multi} or is a node in a \emph{spanning tree}~\cite{tang2017initialization,yan2016constrained,yan2015consistency} of the complete graph connecting all objects as its nodes. Whereas the first subclass does not even guarantee cycle consistency of the result, the second cannot address sparse problems as forbidden matchings are often selected through composition. Moreover, existing methods of this type consider only complete MGMs.

In contrast, our approach extends a feasible solution consisting of $k\leq d$ objects by one object in each iteration until the solution includes all objects, \ie, $k=d$. In combination with the incomplete GM solver~\cite{hutschenreiter_fusionmoves_2021}, it guarantees a cycle-consistent and allowed matching.

\Paragraph{Local search} subroutines are usually based on the observation that cycle-consistent matchings can be improved by re-matching one object to the remaining, already matched $ (d-1) $ objects, see~\cref{fig:gm-ls} in \cref{sec:our-method}.
It turns out that this re-matching constitutes a GM problem.
Hence, one can iteratively re-match different objects utilizing a GM solver.
We refer to this algorithm as \emph{GM local search} (GM-LS), see \cref{sec:qap-local-search} for details.
This idea was initially proposed by~\cite{bandelt2004local} for the closely related \emph{multi-dimensional assignment problem (MDAP)} and then independently for different MGM variants \cite{tang2017initialization,yan2016constrained,yan2013joint,yan2015consistency}.
Similarly to the construction subroutine, existing works explicitly consider local search only for the complete problem.\\
\indent In this work we \emph{parallelize} GM local search and make it applicable to incomplete sparse MGMs. We also propose another local search method, referred to as \emph{(clique) swap local search}, and so far unknown in the computer vision literature. The related work from the operations research is discussed in \cref{sec:swap-local-search}.
\indent \Paragraph{Synchronization approaches}
\cite{chen2018projected,shen2016normalized,chen2014near}
first independently solve all pairwise GM problems, which induces a \emph{cycle-inconsistent} matching.
Afterward, they try to find a \emph{cycle-consistent approximation}
by changing a \emph{minimal} number of matchings~\cite{arrigoni2017synchronization,bernard2021sparse,maset2017practical}.
Such synchronization methods often serve as a subroutine for convex-relaxation-based MGM solvers, \eg,
\cite{swoboda2019convex,bernard2018ds}.
However, they are \emph{expensive} and prone to
\emph{suboptimal} decisions as they require the solution of $d(d-1)/2$
GM problems and ignore costs during the approximation.
Due to the latter, existing synchronization methods also fail to deal with
sparse problems because they often introduce \emph{``blunders''}
in choosing forbidden matchings.
In contrast, our algorithm considers the problem costs in any of its stages and thus avoids blunders.
At the same time, if required, it can also be used as a sparsity-aware synchronization method, see~\cref{sec:synchronization}. \\
\indent \Paragraph{Convex relaxation-based methods} is probably the smallest subclass among existing MGM techniques. The work~\cite{bernard2018ds} considers lifting-free quadratic relaxation, but addresses complete problems only.  The work~\cite{kezurer2015tight} proposes a powerful semi-definite relaxation for incomplete MGMs, but its scalability is strongly limited due to variable lifting, as mentioned by the authors.
The closest competitor for our method is the Lagrange relaxation-based method~\cite{swoboda2019convex}, able to deal with incomplete sparse problems. In \cref{sec:experiments} we demonstrate, however, that our algorithm significantly outperforms~\cite{swoboda2019convex} in terms of runtime and objective value.
\begin{figure}[t!]
  \centering
    \begin{tikzpicture}[scale=0.63]
      \tikzset{vertex/.style = {shape=circle,draw}, inner sep=0.04cm}
      \tikzset{blacked/.style = {shape=circle,draw,fill=black,text=black, inner sep=0.06cm}}
      \tikzset{edge/.style = {-, dotted, cvprgray}}
      \tikzset{greenvertex/.style = {draw=cvprgreen,shape=circle, text=black, line width=0.5mm}, inner sep=0.04cm}
      \tikzset{bluevertex/.style = {draw=cvprblue,shape=circle, text=black, line width=0.5mm}, inner sep=0.04cm}

      \node[rotate=90, align=center] (costs) at (-0.2,0) {Costs};
      \node[align=center, draw, inner sep=5pt] (construction) at (2.4,0) {Feasible Sol. \\Construction\\\cref{sec:construction}};
      \node[align=center, draw, inner sep=5pt, text width=1.35cm] (gm-ls) at (6.2,0) {GM-LS\\\cref{sec:qap-local-search}};
      \node[align=center, draw, inner sep=5pt, text width=1.35cm] (swap-ls) at (9.5,0) {Swap-LS\\\cref{sec:swap-local-search}};
      \node[rotate=90, align=center] (solution) at (12,0) {Solution};

      \draw[->] (costs) -- (construction);
      \draw[->] (construction) -- (gm-ls);
      \draw[->] (swap-ls) -- (solution);

      \node (gm-ls-anchor-north) at (6.2, 0.8) {};
      \node (swap-ls-anchor-north) at (9.5,  0.8) {};

      \node (gm-ls-anchor-south) at (6.2, -0.8) {};
      \node (swap-ls-anchor-south) at (9.5,  -0.8) {};

      \draw[->] (gm-ls-anchor-north) to[bend left=45] (swap-ls-anchor-north);
      \draw[->] (gm-ls-anchor-north) to[bend left=45] (swap-ls-anchor-north);

      \draw[->] (swap-ls-anchor-south) to[bend left=45] (gm-ls-anchor-south);
      \draw[->] (swap-ls-anchor-south) to[bend left=45] (gm-ls-anchor-south);
      
      
\end{tikzpicture}%
  \caption{ 
    \textbf{Conceptual diagram of our MGM method GREEDA.}
    \emph{GREEDA} is composed of three parts --
    a construction heuristic (\cref{sec:construction}) and two local search heuristics \emph{GM-LS} (\cref{sec:qap-local-search}), \emph{SWAP-LS} (\cref{sec:swap-local-search}). 
    }
  \label{fig:pipeline}
\end{figure}

\section{Our method: GREEDA}\label{sec:our-method}
In \cref{sec:definition}, we formalize the MGM problem. In \Cref{sec:construction,sec:qap-local-search,sec:swap-local-search}, we describe the individual building blocks of our method, summarized in \cref{fig:pipeline}, together with the corresponding technical contributions. We term our method \emph{GREEDA} alluding to the greedy nature of our construction algorithm, see \cref{sec:construction}.

\subsection{Formal problem definition}
\label{sec:definition}
\Subsubsection{Graph matching} concerns itself with matching two 
finite keypoint sets  $  V^{1} $ and $  V^{2} $,
further referred to as \emph{vertex sets} or \emph{objects}.
It considers the undirected complete bipartite graph 
$
\overbar{ G} =
\left(
V 
\coloneqq
V^{1} \cup  V^{2},
\overbar{ E}
\coloneqq
V^{1} \times  V^{2}
\right)
$ 
with objects $  V^{1} $ and $  V^{2} $ as independent sets,
where an edge
$ ij\coloneqq(i, j) \in \overbar{ E} $ corresponds to matching vertex 
$ i\in V^{1} $ to vertex $ j\in V^{2} $.
Although discussing undirected graphs,
we write the edge set $ \overbar{E} $ as a Cartesian product $V^{1} \times  V^{2}$ 
to emphasize the two independent sets $ V^1 $ and $ V^2 $.
Due to their independence, we can always identify directed with undirected 
edges.
An \emph{incomplete} matching between object $ V^1 $ and $ V^2 $ is defined
as subset of edges $  E \subset \overbar{ E} $ containing 
\emph{at most one} incident edge for each vertex.
\emph{Complete} matchings, conversely,
contain \emph{exactly one} incident edge for each vertex,
which demands equal cardinalities of both objects $ | V^{1}|=| V^{2}| $.
The goal of GM is to find minimal matchings \wrt given costs
$
C:
\left(
V^{1} \times  V^{2} 
\right)^{2}
\mapsto
\overbar{\BR}
$.
In the matrix identification $ C(i,s,j,t)=C_{is,jt}$,
diagonal entries $ C_{is,is}$ describe \emph{linear} costs, and off-diagonal entries \emph{quadratic} costs, see \cref{fig:problem-description}.
Linear costs $ C_{is,is} $ penalize \emph{vertex-to-vertex} correspondences, \ie, matching vertex $ i\in V^{1} $ to vertex $ s\in V^{2} $, whereby 
infinite costs $ C_{is, is} =\infty $ \emph{forbid} such a matching.
Quadratic costs $ C_{is,jt} $, in turn, penalize 
\emph{vertex-pair-to-vertex-pair} correspondences, 
\ie, matching the pair $ (i,j)\in V^{1}\times V^{1} $ to
the pair $ (s,t)\in V^{2}\times V^{2} $.

\Subsubsection{Multi-graph matching} generalizes GM by matching multiple,
$ d\in\BN_{\geq3} $ objects $V^{p}$, $ p\in[d]\coloneqq[1,d]\cap\BN$, 
\wrt costs between all pairs 
$ C^{p, q}: \left(  V^{p}\times V^{q} \right)^{2} \mapsto \overbar{\BR} $,
$ p\neq q $, $ p,q\in[d] $.
Instead of a bipartite graph, it considers the undirected complete 
\emph{$ d $-partite} graph 
$ \overbar{ G} = \left(  V\coloneqq \cup_{p\in[d]}  V^{p}, \overbar{ E} \right) $
with objects $  V^{p} $ as independent sets.
\emph{Incomplete} multi-matchings 
are subsets of edges
$  E\subset \overbar{ E} $ \st any vertex is incident to \emph{at most one} edge
connecting the same objects.
Similar to GM, \emph{complete} multi-matchings require \emph{exactly one} 
such edge and equal cardinalities of objects.

\Subsubsection{Cycle consistency.}
A multi-matching $  E\subset\overbar{ E} $ is \emph{cycle-consistent} if each 
\emph{path} in $  E $ can be extended within $  E $ to a
\emph{cycle} with at most one vertex per object.
As shown in \cite{swoboda2019convex}, enforcing this 
for all 3-cycles is sufficient. That is, a multi-matching 
$  E\subset\overbar{ E} $ is cycle-consistent
iff $ ij\in E $ and $ jk\in E $ imply $ ik\in E $ for all 
$ i,j,k\in V $, see 
\cref{fig:problem-description}.

\Subsubsection{Clique representation.}
In \cite{tron2017fast}, it was shown that a
multi-matching $  E\subset\overbar{ E} $ is cycle-consistent iff the corresponding 
subgraph $  G=( V, E) $ is a \emph{union of cliques}, \ie, there
exist partitions $ \left\{  Q_l \right\}_{l\in L} $ of vertices $  V $
and $ \left\{  E_l \right\}_{l\in L} $ of edges $  E $ indexed via the 
same finite set $  L$, 
\st for each index $ l\in L $ the subgraph 
$  G|_{ Q_l} $ restricted to part $  Q_l $ is a \emph{clique} with 
edges $ E_l$.
Therefore, cycle-consistent multi-matchings are induced by vertex partitions
$ \SQ\subset 2^{V} $ where any part $  Q\in\SQ $
contains at most one element per object $  V^{p} $,
\ie, $ | Q\cap V^{p}| \leq 1 $, see \cref{fig:problem-description}.
They translate to multi-matchings by considering elements of the same part 
$  Q $ as matched to each other.
We call such vertex partitions \emph{feasible}
and denote the set of feasible vertex partitions or \emph{solutions} as 
$ \BQ $.
The set of feasible partitions over the vertices 
$ V^{D} \coloneqq \bigcup_{p\in D} V^p $
of an object \emph{subset} $ D\subseteq[d] $ is denoted by $ \BQ^D $.
For simplicity, partitions permit the empty set. 
Abusing terminology, we refer to parts $  Q\in\SQ $ of a solution as \emph{cliques}.
The object subset actually covered by such a clique $  Q\in\SQ $ is denoted as
$ D( Q)\subseteq[d] $, and a clique's vertex belonging to object $ p\in D( Q) $
as $  Q^{p} $, \ie, $ \{ Q^{p}\}\coloneqq Q\cap V^{p} $, see 
\cref{fig:gm-ls}.\\
\indent \Subsubsection{The MGM objective}
is to find cycle-consistent multi-matchings 
minimizing the sum of all costs, \ie,
\begin{equation}
	\hspace{-5pt}\min_{\SQ\in\BQ}
	\left[ 
	C(\SQ) \coloneqq
	\sum_{ Q, R\in\SQ}
	\sum_{\substack{p,q\in D( Q)\cap D( R) \\ p < q}}
	C^{p,q}_{ Q^p Q^q, R^p R^q}
	\right]\,,
	\label{eq:MGM}
\end{equation}
where we assume $C^{p,q}=C^{q,p}$ and count this cost for each pair of objects $p,q \in[d]$, only once by requiring $p<q$.
Note that the formulation in Eq. \eqref{eq:MGM} implicitly assumes zero costs for \emph{not} matching a vertex.

\Subsubsection{Alternative formulations.}
The clique formulation from Eq. \eqref{eq:MGM} is non-standard.
Many formulations~\cite{swoboda2019convex,zhou2015multi,chen2014near}
represent (multi-)matchings $  E\subset\overbar{ E} $ by
\emph{partial permutation matrices} $ X $,
where $ X_{is}=1 $ iff $ is\in E $.
Others~\cite{nurlanov2023universe,bernard2019hippi,pachauri2013solving}
view MGM as a \emph{labeling problem},
where each vertex of the same object must be assigned a different label -- 
commonly called \emph{universe point}.
Vertices with the same label are matched to
each other, \ie, comprise a clique.
We use the clique formulation because it allows the most natural 
description of our algorithm.
%

\begin{figure}[t!]
  \centering
  \begin{tikzpicture}[scale=0.85]
      \tikzset{vertex/.style = {shape=circle,draw}, inner sep=0.04cm}
      \tikzset{edge/.style = {-, dotted, cvprgray}}
      \tikzset{black/.style = {shape=circle,draw,fill=black,text=black, inner sep=0.09cm}}
      \tikzset{orange/.style = {shape=circle,draw,fill=tab20orange,text=tab20orange, inner sep=0.09cm}}
      \tikzset{green/.style = {shape=circle,draw,fill=tab20green,text=tab20green, inner sep=0.09cm}}
      \tikzset{blue/.style = {shape=circle,draw,fill=tab20blue,text=tab20blue, inner sep=0.09cm}}
      \tikzset{red/.style = {shape=circle,draw,fill=tab20red,text=tab20red, inner sep=0.09cm}}
      \tikzset{purple/.style = {shape=circle,draw,fill=tab20purple,text=tab20purple, inner sep=0.09cm}}
      \tikzset{purple/.style = {shape=circle,draw,fill=tab20purple,text=tab20purple, inner sep=0.09cm}}

      \tikzset{orangesquare/.style = {draw=tab20orange, rectangle, line width=0.4mm, minimum width=0.5cm, minimum height=0.5cm}}
      \tikzset{greensquare/.style = {draw=tab20green, rectangle, line width=0.4mm, minimum width=0.5cm, minimum height=0.5cm}}
      \tikzset{redsquare/.style = {draw=tab20red, rectangle, line width=0.4mm, minimum width=0.5cm, minimum height=0.5cm}}
      \tikzset{bluesquare/.style = {draw=tab20blue, rectangle, line width=0.4mm, minimum width=0.5cm, minimum height=0.5cm}}
      \tikzset{purplesquare/.style = {draw=tab20purple, rectangle, line width=0.4mm, minimum width=0.5cm, minimum height=0.5cm}}

      \node[greensquare] (S) at (0, 3) {$ Q $}; 
      \node[purplesquare] (Q) at (0, 2) {$ S $};
      \node[redsquare] (R) at (0, 1) {$ T $};
      \node[bluesquare] (T) at (0, 0) {$ R $};

      \node (V^p) at (-3, 4) {$ V^p $};
      \node (V^q) at (-2, 4) {$ V^q $};
      \node (V^r) at (-1, 4) {$ V^r $};
      \node (V^w) at (2,  3.5) {$ V^w $};

      \node (SQ) at (0, 4) {$ \SQ $};

      \node[green] (S_p) at (-3, 3) {};
      \node[green] (S_q) at (-2, 3) {};
      \node[green] (S_r) at (-1, 3) {};
      
      \node[purple] (Q_q) at (-2, 2) {};
      \node[purple] (Q_r) at (-1, 2) {};
      
      \node[red] (R_r) at (-1, 1) {};
      
      \node[blue] (T_p) at (-3, 0) {};
      \node[blue] (T_q) at (-2, 0) {};

      \node[vertex, minimum size=0.5cm] (i) at (2, 2.5) {$ i $};
      \node[vertex, minimum size=0.5cm] (j) at (2, 1.5) {$ j $};
      \node[vertex, minimum size=0.5cm] (l) at (2, 0.5) {$ l $};

      \node[draw, rectangle, minimum width=0.75cm, minimum height=3.75cm, rotate=0] (box_p) at (-3,1.5) {};
      \node[draw, rectangle, minimum width=0.75cm, minimum height=3.75cm, rotate=0] (box_q) at (-2,1.5) {};
      \node[draw, rectangle, minimum width=0.75cm, minimum height=3.75cm, rotate=0] (box_r) at (-1,1.5) {};
      \node[draw, rectangle, minimum width=0.75cm, minimum height=3.75cm, dashed] (box_r) at (0,1.5) {};
      \node[draw, rectangle, minimum width=0.75cm, minimum height=2.75cm, rotate=0] (box_w) at (2,1.5) {};

      \draw[-, black] (S) to (i);
      \draw[edge] (S) to (j);
      \draw[edge] (S) to (l);
      \draw[edge] (Q) to (i);
      \draw[edge] (Q) to (j);
      \draw[edge] (Q) to (l);
      \draw[edge] (R) to (i);
      \draw[edge] (R) to (j);
      \draw[edge] (R) to (l);
      \draw[edge] (T) to (i);
      \draw[-, black] (T) to (j);
      \draw[edge] (T) to (l);

      \draw[line width=0.03cm, tab20green] (-2,3) ellipse (1.5cm and 0.25cm);
      \draw[line width=0.03cm, tab20purple] (-1.5,2) ellipse (1cm and 0.25cm);
      \draw[line width=0.03cm, tab20red] (-1,1) ellipse (0.5cm and 0.25cm);
      \draw[line width=0.03cm, tab20blue] (-2.5,0) ellipse (1cm and 0.25cm);

      \node (T^p_label) at (-3.8,-0.5) {$ R^{p} $};
      \draw[->, dashed, line width=0.03cm,shorten >= 2pt]  (T^p_label) to[bend left=30] (T_p);


      \node (S^p_label) at (-3.8,3.5) {$ Q^{p} $};
      \draw[->, dashed, line width=0.03cm,shorten >= 2pt]  (S^p_label) to[bend right=30] (S_p);

      \node (DT) at (-2.5, -1) {$ D(R)=\{p,q\} $};
      \draw [decorate, decoration={brace, amplitude=5pt, mirror}] (-3.5,-0.5) -- (-1.5,-0.5);

      \node[fill=white, opacity=.5,text opacity=1] (C^wQ) at (0.96, 2.10) {$ C^{w,\SQ}_{iQ,jR} $};
      \node (iS_anchor) at (1.3, 2.65) {};
      \node (jT_anchor) at (1.3, 1.05) {};
      \node (arrow_anchor) at (1.3, 2) {};
      \draw[<->, dashed, line width=0.03cm]  (iS_anchor) to[bend left=27] (jT_anchor);

      \draw[-, line width=0.2mm, shorten <= 10pt, tab20green] (S_r) to (S);
      \draw[-, line width=0.2mm, shorten <= 10pt, tab20purple] (Q_r) to (Q);
      \draw[-, line width=0.2mm, shorten <= 10pt, tab20red] (R_r) to (R);
      \draw[-, line width=0.2mm, shorten <= 10pt, tab20blue] (T_q) to (T);

      \node[fill=white, text=white] at (-3.375, 2.25) {$ O $};
      \node[fill=white, text=white] at (-3.375, .75) {$ O $};
      \draw[<->, line width=0.03cm, shorten >=5pt, shorten <= 5pt] (S_p) to[bend right=35] (T_p);
      \node[rotate=90] at (-3.9, 1.5) {\textbf{Swap}};

\end{tikzpicture}
  \caption{
  	\textbf{Feasible solution construction and swap local search.}
  	Depicted are three cycle-consistently matched objects  
  	$ V^p, V^q, V^r $.  Matched vertices are of the same color,
  	horizontally aligned, and decomposed into cliques $ Q,S,T,R $ (ellipses),
  	yielding a partial solution $ \SQ $ (dashed rectangle).
  	As example, the clique $ R $ spans the objects $ D(R)=\{p,q\} $, where 
  	its vertex $ R^p $ belongs to object $ V^p $.
  	The costs $ C^{w,\SQ}_{iQ,jR} $ for matching vertex
  	$ i $ and $ j $ of object $ V^w $ to cliques $ Q $ and $ R $ of the partial solution $\SQ$ are also shown.
    We consider two types of swaps (solid arrow), see \cref{sec:swap-local-search}: The \emph{vertex swap} fixes the object $p$ and interchanges the vertices $ Q^p $ and $ R^p $. 
    The \emph{2-clique swap} is more global and jointly optimizes over all possible vertex swaps between cliques $ Q $ and $ R $.\vspace{-0.4cm}
  }
  \label{fig:gm-ls}
\end{figure}

\subsection{Feasible solution construction}
\label{sec:construction}
\Subsubsection{Basic algorithm.}
The basic construction \cref{alg:construction} obtains feasible
solutions by solving a \emph{chain} of GM problems.
Given a \emph{random ordering} of objects $ [d] $,
it iteratively extends a \emph{partial solution}, which,
in the k-th iteration, matches the objects
$  V^{1}, \dots,  V^{k} $.
The matching $  E $ is the solution of the GM problem with costs
$ C^{k+1,\SQ}_{iS,jT},\ {i,j\in V^{k+1}},\ {S,T\in\SQ}$,  
stemming from the \emph{summation} over a clique's individual elements
\begin{equation}
	C^{k+1,\SQ}_{i S,j T} 
	\coloneqq 
	\sum_{q\in D( S)\cap D( T)}
	C^{k+1, q}_{i S^q,j T^q}\,.
	\label{eq:basic-costs}
\end{equation}
The partial solution is extended by 
adding matched vertices $ i\in V^{k+1}, i S\in E $ 
to their assigned cliques 
$ \{i\}\cup S \in \SQ'$.
If a vertex $ i\in V^{k+1} $ is unmatched, \ie, 
$ \forall j S\in E: j\neq i $, it is added as a singleton $ \{i\}\in\SQ' $.
If a clique $  S\in\SQ $ is unmatched, \ie, $ \forall i T\in E:  S\neq  T $,
it remains unchanged $  S\in\SQ' $.
We denote the (partial) solution resulting from this \emph{merge} as
$ \SQ'=\merge( V^{k+1}, \SQ;  E) $.

\cref{alg:construction} has six notable properties:
(1) It guarantees cycle consistency. (2) It can use virtually any GM solver as a black-box. (3)  It is independent of the cost order and thus applicable to higher-order problems as long as the underlying GM solver is applicable. (4) It is parameter-free \wrt the number of cliques. (5)  It scales linearly in the number of objects $ d\in\BN_{\geq3} $. (6) It is randomized, hence the best solution from its multiple parallel runs can be kept.\\ 
\Subsubsection{Extensions and parallelization.}
We propose two extensions of \cref{alg:construction}: \emph{Incremental} and 
\emph{parallel} construction.
\Paragraph{Incremental construction} addresses the problem of 
\emph{error propagation}.
Errors during early matchings ``propagate'' along the chain of pairwise problems,
sway later matchings, and worsen the final solution.
Because MGM problems' restrictions to object subsets are again MGM problems,
we propose to weaken such propagations by using a \emph{better} 
(but potentially more expensive) \emph{multi-matching solver} for the first 
$ s\in[d] $ objects $ [s] $.

\Paragraph{Parallel construction} improves the linear scaling \wrt the number of 
objects. 
It generalizes the chain of pairwise problems to \emph{binary trees}, 
specifically \emph{leaf-labeled, ordered, binary trees} with 
\emph{leaf label set} $ [d] $.
Leaf labels associate leaves with objects.
Each tree vertex corresponds to a (partial) solution $ \SQ\in\BQ^{D} $ 
matching its descendant leaves $ D\subseteq[d] $.
In analogy to the basic variant, a tree vertex's (partial) solution is
obtained by first solving a GM problem that matches the cliques of its children 
$ \SA\in\BQ^{D_A}$, 
$\SB\in\BQ^{D_B} $
and then merging matched cliques.
Pairwise costs  $ C^{\SA,\SB}_{IS,JT},I,J\in\SA,S,T\in\SB $,
are again obtained by summing costs of a clique's individual
elements,
\begin{equation}
	C^{\SA, \SB}_{ I S, J T}
	\coloneqq
	\sum_{p\in D( I)\cap D( J)}
	\sum_{q\in D( S)\cap D( T)}
	C^{p, q}_{ I^p S^q, J^p T^q} \,.
	\label{eq:parallel-costs}
\end{equation}
Ultimately, the root corresponds to a final solution.
GM problems of tree vertices at the same level can be solved in 
\emph{parallel}.
Therefore, \emph{balanced} trees yield the best acceleration --
in sequence only $ O(\log(d)) $ instead of $ O(d) $ problems need solving.
Additionally, most properties of the basic construction transfer to the 
parallel version:
It is \emph{randomized} \wrt the leaf order,
allows \emph{plug-\&-play} of pairwise solvers, 
guarantees \emph{cycle consistency},
and is \emph{parameter-free} \wrt the number of cliques.
Pseudocode and further details are provided in \cref{suppl:construction}.
\begin{algorithm}[t]
\addtolength{\hsize}{1.5em}%
\DontPrintSemicolon
  $
    \SQ
    \leftarrow
    \bigcup_{i\in V^{1}}  \left\{ i \right\}
  $\;
  \SetInd{0.5em}{0.5em}
  \For{$ k\in \left\{ 1,\dots, d-1 \right\} $}{
    \tcp{compute matching $E$ between object $(k+1)$ and the already matched $[k]$}
    $ E  \leftarrow $ (Approx.) solve GM with costs 
    $ 
      C^{k+1, \SQ}
    $
    (Eq. \eqref{eq:basic-costs})\; 
    $ 
      \SQ
      \leftarrow 
      \merge
      ( 
        V^{k+1},
        \SQ; \
        E
      )
    $\;
  }
\caption{Basic solution construction.}
\label{alg:construction}
\end{algorithm}

\Subsubsection{Our novelty and related work.}
The basic solution construction \cref{alg:construction}
was first mentioned in \cite{bandelt1994approximation} 
and later used in~\cite{bandelt2004local}
for complete Multi-LAPs seen as a special case of the MDAP.
Experiments in \cite{bandelt2004local} also demonstrate its superiority 
over star-shaped spanning tree approaches.
Its application to \emph{Multi-QAPs}
and its \emph{parallelization} using binary trees are new.
Incremental construction conceptually resembles 
an idea proposed in \cite{tang2017initialization}, where,
as part of a spanning tree heuristic, 
a local search is run on the hitherto constructed partial solution.
To summarize, our contribution is to introduce the prior art from operations research \cite{bandelt1994approximation,bandelt2004local}, applying it to \emph{Multi-QAPs} instead of Multi-LAPs, 
and proposing new extension and parallelization schemes.

\subsection{GM local search}
\label{sec:qap-local-search}
\Subsubsection{Basic algorithm.}
The basic GM local search step comprises \emph{splitting} and \emph{merging}
an object $  V^{p} $ from and to a given solution 
$ \SQ\in\BQ $ as defined by \cref{alg:qap-ls}.
An object $  V^{p} $ is split by \emph{subtracting} $  Q \setminus  V^{p} $
its vertices from all cliques $  Q\in\SQ $.
Merging it back to the split solution works just as in \cref{sec:construction}
by solving a GM problem with costs from Eq.~\eqref{eq:basic-costs}
whose solution dictates the merge.
While only accepting improvements, 
the basic GM local search \cref{alg:qap-ls} repeats this step 
along an \emph{object sequence} $ \left( p_k \right)_{k\in\BN}, p_k\in[d] $,
until a suitable stopping criterion is met.
In practice, we run \cref{alg:qap-ls} cyclically along the random object 
ordering $ [d] $ used in \cref{alg:construction}
as long as the solution improves.
%
Like \cref{alg:construction},
the pairwise solver is \emph{blackboxed} and
solutions are always \emph{cycle-consistent}.
\begingroup
\begin{algorithm}[t]
\addtolength{\hsize}{1.5em}%
\DontPrintSemicolon
\KwIn{Solution $ \SQ\in\BQ $, Object Sequence $ (p_k)_{k\in\BN}$}
\SetInd{0.7em}{0.7em}
  $ k \leftarrow 1 $\; 
  \While{stopping criterion not met}{
  \tcp{split object $k$ from the matching $\SQ$}
    $
      \SQ'
      \leftarrow
      \{\ Q\setminus V^{p_k} | Q\in\SQ \}
    $\;
    \tcp{compute matching $E$ between object $k$ and the rest}
    $ E  \leftarrow$ (Approx.) solve GM with costs 
    $ C^{p_k, \SQ'} $ (Eq. \eqref{eq:basic-costs})\; 
    $ 
      \SQ'
      \leftarrow 
      \merge
      ( 
        V^{p_k},
        \SQ'; \
        E
      )
    $\;
    \If{$ C ( \SQ' ) < C ( \SQ ) $}{
      $ \SQ \leftarrow \SQ'$ \tcp{accept if profitable}
    }
    $ k \leftarrow k+1 $\;
  }
\caption{Basic GM local search.}
\label{alg:qap-ls}
\end{algorithm}
\endgroup


%

\Subsubsection{Extensions and parallelization.}
A known extension~\cite{bandelt2004local} of this local search is to split a solution 
$ \SQ\in\BQ  $
along a subset of objects $ D\subset[d] $.
As a result, two partial solutions 
$ \{  Q\cap V^D |  Q\in\SQ \}\in\BQ^D $,
$ \{  Q\cap V^{[d]\setminus D} |  Q\in\SQ \}\in\BQ^{[d]\setminus D} $
are merged using Eq. \eqref{eq:parallel-costs}.
Since these extensions performed worse than the base
in preliminary experiments, we did not investigate them further.

Our proposed \emph{parallelized} search step consists of two passes.
First, it runs the basic search step for \emph{all}
objects $ p\in[d] $ in \emph{parallel},
splitting $ d $ objects $ \SQ^p\coloneqq \{Q\setminus V^p | Q\in\SQ \} $,
generating $ d $ matchings $  E^p \subset  V^{p}\times\SQ^p$, 
yielding $ d $ proposed solutions, but ultimately leaving the input solution 
$ \SQ\in\BQ $ untouched.
Second, it splits and merges objects according to the
collected matchings $  E^p, p\in[d], $ in \emph{ascending}
order of their proposed solutions' 
objective value -- only accepting profits.
The more matchings yield profit despite being ``outdated'' 
(because prior matchings changed the solution), the more 
\emph{acceleration} is generated over the basic search step.
Both the basic and parallel search step \emph{sequentially}
solve only \emph{one} GM problem.
Additionally, the parallelized step always accepts the 
\emph{most profitable} proposal though this is only a
\emph{secondary} source of acceleration,
as demonstrated in \cref{sec:ablation-study}.
Further details can be found in \cref{suppl:qap-local-search}.

\Subsubsection{Our novelty and related work.}
Again, the basic \cref{alg:qap-ls}
was first mentioned in \cite{bandelt2004local} 
targeting complete Multi-LAPs as special case of the MDAP.
In \cite{bandelt2004local}, its extension to object subsets is 
investigated and our preliminary findings are confirmed 
for the Multi-LAP.
In computer vision, it was independently proposed in 
\cite{tang2017initialization,yan2016constrained}
for Multi-LAPs and in 
\cite{yan2013joint,yan2015consistency} for Multi-QAPs,
both only complete.
Our parallelized extension is novel.%
\subsection{Swap local search}
\label{sec:swap-local-search}

\begingroup
\begin{algorithm}[t]
\addtolength{\hsize}{1.5em}%
\DontPrintSemicolon
\KwIn{Solution $ \SQ\in\BQ $}
\SetInd{0.7em}{0.7em}
  \While{stopping criterion not met}{
  	\For{$Q, R \in \SQ$}{
  		$x^* \leftarrow$ Approximately solve Eq. \eqref{eq:QPBO-swap} for cliques $Q, R$\;
  		\tcp{swap the vertices according to $x^*$}
  		$Q', R' \leftarrow \operatorname{swap}(x^{*}, Q, R)$ \;
        \tcp{update the clique representation}
  		$ \SQ^\prime \leftarrow (\SQ \setminus \{Q, R\}) \cup \{Q^\prime,R^\prime\}$ 	
 		
	  	\If{$ C ( \SQ' ) < C ( \SQ ) $}{
	  		$ \SQ \leftarrow \SQ'$ \tcp{accept if profitable}
	  	}
  	}
  }
\caption{Swap local search.}
\label{alg:swap-ls}
\end{algorithm}
\endgroup

\Subsubsection{Vertex swap.}
Given a solution $ \SQ\in\BQ $,
\emph{swapping} vertices of the same object between
two cliques $ Q,R\in\SQ $ maintains feasibility.
While fixing one object $ p\in[d] $, a \emph{vertex swap} subtracts
a vertex covered by either clique from said clique and
adds it to the other, yielding two new cliques, $ Q' $ and $ R' $, 
see \cref{fig:gm-ls}.
For example, if both cliques $ Q $ and $ R $ cover the fixed object 
$ p\in[d] $ and
the vertices $ Q^p $ and $ R^p $
are swapped, it results in 
$ Q' = (Q\setminus \{Q^p\}) \cup \{R^p\} $
and
$ R' = (R\setminus \{R^p\}) \cup \{Q^p\} $.
If only one clique covers the fixed object $p$, only one vertex changes its clique, \ie
$ Q' = (Q \setminus \{Q^p\})$
and
$ R' = (R \cup \{Q^p\}) $.
Finally, both cliques remain unchanged if none covers the object.
After a swap, cliques still cover at most one vertex per object.
The new solution 
$ \SQ'= (\SQ\setminus\{Q,R\})\cup\{Q',R'\}$
is, thus, also feasible.
Furthermore, the objective value change induced by a vertex swap
decomposes into a sum of objective value changes
$ \delta C^{p,q}_{\text{swap}}(Q,R) $
between the fixed object $ p\in[d] $ and all other objects 
$ q\in[d]\setminus\{p\} $, \ie,
\begin{equation}
	C(\SQ^\prime) - C(\SQ) =
	\sum_{
		q \in [d] \setminus \{p\}
	}
	\delta C^{p,q}_{\textrm{swap}}(Q, R) \,. 
	\label{eq:single-swap-energy}
\end{equation}
For each other object $ q\in [d] \setminus \{ p \} $, the corresponding objective value change is the difference between the cost terms involving the cliques $ Q', R' $ \wrt the solution $ \SQ' $ and the cost terms involving the cliques $ Q, R $ \wrt the solution $ \SQ $, \ie,
\begin{equation}
  \delta C^{p,q}_{\textrm{swap}}(Q, R) \coloneqq C^{p, q}_{\left\{ Q', R' \right\}} (\SQ') - C^{p, q}_{\left\{ Q, R \right\}} (\SQ) \ ,
  \label{eq:single-swap-energy-1}
\end{equation}
where the sum of the respective cost terms $ C^{p,q}_{\left\{ Q, R \right\}}(\SQ) $ for the solution $ \SQ $ is given by
\begin{equation}
\begin{aligned}
C^{p,q}_{\left\{ Q, R \right\}} (\SQ) \coloneqq 
    &\sum_{S \in \{Q,R\}} \Biggl( \sum_{T \in \{Q,R\}} C^{p,q}(S,T) \\
    &+ \sum_{T \in \SQ \setminus \{Q,R\}}
        \bigl(C^{p,q}(S,T) + C^{p,q}(T,S)\bigr)
\Biggr) \ .
\label{eq:single-swap-energy-2}
\end{aligned}
\end{equation}
Here, $ C^{p,q}(S,T) $ denotes the costs of matching the objects $ p, q $ through the cliques $ S,T\in\SQ $, given by
\begin{equation}
  C^{p, q} (S,T) = \left\{  
\begin{array}{ll}
      C^{p,q}_{S^pS^q,T^pT^q}&; p<q \wedge p, q\in D(S, T) \\
      C^{q,p}_{S^qS^p,T^qT^p}&; q<p \wedge p, q\in D(S, T) \\
      0&;\text{ else} 
  \end{array} \right. \ ,
  \label{eq:single-swap-energy-3}
\end{equation}
where $ D(S, T) \coloneqq D(S) \cap D(T) $ are all objects covered by the cliques $ S, T \in \SQ $.
A more detailed derivation is given in \cref{suppl:swap-local-search}. 

\Subsubsection{2-clique swap.}
We leverage the decomposition in \cref{eq:single-swap-energy} to
\emph{jointly} consider \emph{all possible} swaps
between two cliques $ Q,R\in\SQ $.
A 2-clique swap is a set of binary decision variables
$ x=\{x^p\}_{p\in[d]}\in\{0,1\}^{d} $,
deciding for every object $ p\in[d] $, whether the vertex swap 
fixing it should be performed ($ x^p=1 $), or not ($ x^p=0 $).
Performing all vertex swaps specified by a 2-clique swap $ x\in\{0,1\}^d $
yields, again, two new cliques $ Q' $ and $ R' $, which we denote by 
$ Q',R'=\operatorname{swap}(x; Q, R) $.
The optimal 2-clique swap $ x^* $ is a solution of the following 
quadratic pseudo-boolean optimization (QPBO) problem~\cite{boroshammer02}:
\begin{equation}
	x^* \in 
	\argmin_{x \in \{0,1\}^d}
	\sum_{p \in [d]} 
	x^{p}
	\left(
	\sum_{q \in [d]} 
	\overline{x^{q}}	
	\cdot \delta C^{p,q}_{\textrm{swap}}(Q, R)
	\right) \,,
	\label{eq:QPBO-swap}
\end{equation}
where $\overline{x} = (1-x)$ denotes negation.
The implicit product $x^{p}\overline{x^{q}}$ allows to count only those $\delta C^{p,q}_{\textrm{swap}}(Q, R)$ where exactly one vertex pair, 
either $ Q^p, R^p $ or $ Q^q, R^q $ was swapped between them, but not both, as the latter does not change the costs.
This problem is NP-hard~\cite{rother07-cvpr},
which is why we address it using a
state-of-the-art
approximative solver QPBO-I~\cite{rother07-cvpr}.
Our swap local search \cref{alg:swap-ls} iterates over all
clique pairs, solving the problem from Eq. \eqref{eq:QPBO-swap} with 
QPBO-I and performing the obtained 2-clique swap if profitable.
\Subsubsection{Sparsity constraints} can \emph{couple} decision variables in \cref{eq:QPBO-swap}, since forbidden matchings \emph{force} vertices of different graphs to be swapped \emph{jointly}, see \cref{fig:swap-partitions}.
If a vertex swap between two cliques $ Q, R\in \SQ $ along an object $ p\in D(Q) $ results in a forbidden matching, because there exists an object $ q\in D(R) $ \st either the matching of $ Q^p $ to $ R^q $ or of $ R^p $ to $ Q^q $ is forbidden, then the vertex swaps along both objects $ p, q $ must be performed jointly. 
We also say the objects $ p,q $ are coupled, \ie, $ x^{p}=x^{q} $.
This relationship is \emph{transitive}.
If the objects $ p,q $ and the objects $ q, r $ are coupled, then the objects $ p,r $ are so as well, \ie,  the vertex swaps along all three objects $ p, q, r $ must be performed jointly. 
Hence, the set of covered graphs $ D(Q) \cup D(R)\subset [d] $ admits a partition $ \SD\subset 2^{[d]} $ into sets of objects $ D\in \SD $ whose vertices must be swapped jointly.
Instead of \cref{eq:QPBO-swap}, one can, therefore, solve the \emph{smaller} quadratic pseudo-boolean optimization problem:
\begin{equation}
	x^* \in 
	\argmin_{x \in \{0,1\}^{\SD}}
	\sum_{A \in \SD} 
	x^{A}
	\left(
	\sum_{B \in \SD} 
	\overline{x^{B}}	
	\cdot \delta C^{A,B}_{\textrm{swap}}(Q, R)
	\right) \,
	\label{eq:QPBO-sparse-swap}
\end{equation}
with costs 
\begin{equation}
	\delta C^{A,B}_{\textrm{swap}}(Q, R) =
	\sum_{p\in A} \sum_{q\in B} \delta C^{p,q}_{\textrm{swap}}(Q, R) \ .
	\label{eq:QPBO-sparse-swap-cost}
\end{equation}
In practice, we build the \emph{swap partition} $ \SD \subset 2^{[d]}$ from the \emph{bottom up} via \cref{alg:swap-partition}. The partition is initialized as singletons and then iteratively updated by merging two parts if each contain objects to be coupled.
If \emph{all} covered objects are coupled, problem \eqref{eq:QPBO-sparse-swap} contains a single decision variable, hence, is \emph{trivially solvable}. 
In application, this is, nonetheless, a common case, \eg, when one clique exclusively matches vertices in the head and the other in the tail of the worm, see \cref{sec:real_world_appl}.
For this reason alone, solving problem \eqref{eq:QPBO-sparse-swap} instead of \eqref{eq:QPBO-swap}  \emph{significantly accelerates} our previously proposed swap local search \cite{kahl2025towards}.

\begingroup
\begin{algorithm}[t]
  \addtolength{\hsize}{1.5em}%
  \DontPrintSemicolon
  \KwIn{Cliques $ Q,R\in \SQ $}
  \SetInd{0.7em}{0.7em}
  \tcp{initialize as singletons}
  $   \SD \leftarrow \left\{ \left\{ p \right\} : p\in D(Q)\cap D(R) \right\} $\;
  \For{$p \in D(Q)\cap D(R)$}{
    $ A \leftarrow A\in \SD  $ \st $ p\in A $ \;
    \For{$B \in \SD$}{
      \If{$ \exists q\in B: C^{p,q}_{Q^pR^q,Q^pR^q} = \infty $ or $ C^{p,q}_{R^pQ^q,R^pQ^q} = \infty  $}{
        \tcp{merge object sets to be coupled}
        $ \SD \leftarrow \SD \setminus \left\{ A,B \right\} \cup \left\{ A \cup B \right\} $ \;
      }
    }
  }
  \caption{Swap partition construction.}
  \label{alg:swap-partition}
\end{algorithm}
\endgroup

\Subsubsection{Our novelty and related work.}
The vertex swap was originally introduced for the 
MDAP with three objects~\cite{balas1991algorithm}
and later extended~\cite{murphey1997greedy,robertson2001set,karapetyan2011local}
to consider multiple swaps at the same time.
However, these propositions come without a method to solve the 2-clique swap problem efficiently.
Our contributions are: Formulating and solving the 2-clique swap as a quadratic pseudo-boolean optimization problem, and efficiently adapting it to sparse problems.

\begin{figure}[t!]
  \centering
  \begin{tikzpicture}[scale=0.85]
  \tikzset{vertex/.style = {shape=circle,draw}, inner sep=0.04cm}
  \tikzset{edge/.style = {-, dotted, cvprgray}}
  \tikzset{black/.style = {shape=circle,draw,fill=black,text=black, inner sep=0.09cm}}
  \tikzset{orange/.style = {shape=circle,draw,fill=tab20orange,text=tab20orange, inner sep=0.09cm}}
  \tikzset{green/.style = {shape=circle,draw,fill=tab20green,text=tab20green, inner sep=0.09cm}}
  \tikzset{blue/.style = {shape=circle,draw,fill=tab20blue,text=tab20blue, inner sep=0.09cm}}
  \tikzset{red/.style = {shape=circle,draw,fill=tab20red,text=tab20red, inner sep=0.09cm}}
  \tikzset{purple/.style = {shape=circle,draw,fill=tab20purple,text=tab20purple, inner sep=0.09cm}}
  \tikzset{purple/.style = {shape=circle,draw,fill=tab20purple,text=tab20purple, inner sep=0.09cm}}

  \tikzset{orangesquare/.style = {draw=tab20orange, rectangle, line width=0.4mm, minimum width=0.5cm, minimum height=0.5cm}}
  \tikzset{greensquare/.style = {draw=tab20green, rectangle, line width=0.4mm, minimum width=0.5cm, minimum height=0.5cm}}
  \tikzset{redsquare/.style = {draw=tab20red, rectangle, line width=0.4mm, minimum width=0.5cm, minimum height=0.5cm}}
  \tikzset{bluesquare/.style = {draw=tab20blue, rectangle, line width=0.4mm, minimum width=0.5cm, minimum height=0.5cm}}
  \tikzset{purplesquare/.style = {draw=tab20purple, rectangle, line width=0.4mm, minimum width=0.5cm, minimum height=0.5cm}}

  \node (SQ) at              (3.0, 3.60) {$ \SQ $};
  \node[greensquare]  (Q) at (3.0, 3.00) {$ Q $};
  \node[purplesquare] (R) at (3.0, 2.00) {$ R $};

  \node (V^p) at (-3, 3.60) {$ V^p $};
  \node (V^q) at (-2, 3.60) {$ V^q $};
  \node (V^r) at (-1, 3.60) {$ V^r $};
  \node (V^p) at ( 0, 3.60) {$ V^s $};
  \node (V^q) at ( 1, 3.60) {$ V^t $};
  \node (V^r) at ( 2, 3.60) {$ V^u $};

  \node[green] (Q_p) at (-3, 3) {};
  \node[green] (Q_q) at (-2, 3) {};
  \node[green] (Q_r) at (-1, 3) {};
  \node[green] (Q_s) at ( 0, 3) {};
  \node[green] (Q_t) at ( 1, 3) {};
  \node[green] (Q_u) at ( 2, 3) {};

  \node[purple] (R_p) at (-3, 2) {};
  \node[purple] (R_q) at (-2, 2) {};
  \node[purple] (R_r) at (-1, 2) {};
  \node[purple] (R_t) at ( 1, 2) {};
  \node[purple] (R_u) at ( 2, 2) {};

  \node[draw, rectangle, minimum width=2.25cm, minimum height=2.05cm] (box_A) at (-2.0, 2.70) {};
  \node[draw, rectangle, minimum width=0.60cm, minimum height=2.05cm] (box_B) at ( 0.0, 2.70) {};
  \node[draw, rectangle, minimum width=1.45cm, minimum height=2.05cm] (box_C) at ( 1.5, 2.70) {};

  \node (A) at (-2.0, 1.2) {$A$};
  \node (B) at ( 0.0, 1.2) {$B$};
  \node (C) at ( 1.5, 1.2) {$C$};

  \draw[line width=0.03cm, tab20green] (-0.5,3) ellipse (3.00cm and 0.35cm);
  \draw[line width=0.03cm, tab20purple] (-0.5,2) ellipse (3.00cm and 0.35cm);

  \draw[-, black] (Q_p) to (R_q);
  \draw[-, black] (R_q) to (Q_r);

  \draw[-, black] (Q_t) to (R_u);
  \draw[-, black] (R_t) to (Q_u);

  \node[fill=white, opacity=0, text opacity=1, align=left] (C) at (-5.,2.25) {$C^{p,q}_{Q^pR^q,Q^pR^q}=\infty$};
  \node (anchor_l) at (-3.5, 2.5) {};
  \node (anchor_r) at (-2.55, 2.5) {};
  \draw[->, dashed, line width=0.03cm]  (anchor_l) to[bend left=10] (anchor_r);

\end{tikzpicture}
  \caption{
    \textbf{Swap partition.}
    Shown are two cliques $ Q, R\in \SQ $ spanning six objects $ V^{p} $ to $ V^{u} $. Solid lines depict sparsity constraints, \ie, infinite linear costs for matching two vertices, \eg, $ Q^{p} $ and $ R^{q} $. These constraints induce the \emph{swap partition} with object subsets $ A=\left\{ p, q, r \right\}, B=\left\{ s \right\}$, and $ C=\left\{ t, u \right\} $ (solid rectangles). Swaps along objects within each subset must be performed \emph{jointly} to avoid \emph{forbidden} matchings, reducing the problem size from six to three decision variables.
  }
  \label{fig:swap-partitions}
\end{figure}

\section{Sparse Synchronization}
\label{sec:synchronization}
\emph{GREEDA}, as a MGM meta-heuristic, applies to not just quadratic but arbitrary, \eg, linear, costs due to the
interchangeability of the GM solver.
It is therefore \emph{also} a synchronization method.
Synchronization can be seen a constructive heuristic, that reduces
MGM from arbitrary cost orders to a special case of the Multi-LAP.
It has two stages.
First, all $d(d-1)/2$ GM subproblems between all object pairs $ p,q\in[d] $ (with costs $ C^{p, q} $)
are \emph{independently} solved, potentially in parallel,
yielding matchings $  E^{p,q} \subset V^{p}\times V^{q}$
.
These matchings induce an, a priori, cycle-\emph{inconsistent}
multi-matching $  E \coloneqq \bigcup_{p, q\in[d]} E^{p,q} $.
Second, a cycle-consistent multi-matching $  E^* $ sharing the largest possible
number of matchings with $ E $ is (approximately) recovered~\cite{arrigoni2017synchronization,bernard2021sparse,maset2017practical}, \ie,
\begin{equation}\label{eq:sync}
  E^*\in \argmax_{ F } | E\cap F| \qquad
  \text{s.t.}\  F\subset\overbar{ E} \text{ cycle-consistent } \,.
\end{equation}
The recovering problem~\eqref{eq:sync} can be formulated as a \emph{synchronization Multi-LAP} between all objects $  V^{p} $ with costs
\begin{equation}
  M^{p,q}_{is,is}
  =
  \left\{
  \begin{array}{ rl }
    -1, & \text{ if } is\in E\,, \\
    0,  & \text{ else }\,.       \\
  \end{array}
  \right.
  \label{eq:sync-dense-costs}
\end{equation}
An alternative, which agrees with Eq.~\eqref{eq:sync} for complete problems,
is to recover the projection \wrt to the \emph{Hamming distance}, \ie, minimizing the
symmetric difference $ | E\Delta E^*| $~\cite{bernard2019synchronisation,pachauri2013solving}.
For details and the corresponding Multi-LAP, see \cref{suppl:hamm-dist}.


Synchronization's Achilles' heel is its \emph{ignorance of MGM costs}
during recovery, \ie the second stage~\eqref{eq:sync}.
Especially for \emph{sparse} models with a majority of \emph{forbidden} matchings
of \emph{infinite} costs $ C^{p,q}_{is,is}=\infty $,
including just one in the final solution $  E^* $ renders it forbidden.
Instead, we propose to take \emph{forbidden matchings} into account by solving the
above mentioned synchronization Multi-LAP with costs
\begin{equation}
  N^{p,q}_{is,is}
  =
  \left\{
  \begin{array}{ ll }
    \infty          & ,\text{ if } C^{p,q}_{is,is}=\infty\,, \\
    M^{p,q}_{is,is} & ,\text{ else}\,,                       \\
  \end{array}
  \right.
  \label{eq:sync-sparse-costs}
\end{equation}
instead of~\eqref{eq:sync-dense-costs}.
When used within our algorithms, the sparsity-aware costs \eqref{eq:sync-sparse-costs} always lead to \emph{feasible} solutions, containing only \emph{allowed} matchings.
We refer to this sparse synchronization variant of our algorithm as \emph{GREEDA (Sync)}. 
In \emph{GREEDA (Sync)}, we use SciPy's \cite{2020SciPy-NMeth} modified Jonker-Volgenant algorithm as described in \cite{crouse2016implementing} for the LAPs. 

\Subsubsection{Our novelty and related work.}
Usually, synchronization methods neither discuss sparsity nor apply to
infinite costs.
The only sparse synchronization method we could find, and this is only
through communication with the authors, is a greedy subroutine
of \cite{swoboda2019convex} not described in their paper.
In contrast, our method, applied to the synchronization Multi-LAP, efficiently and accurately handles the costs from Eq. \eqref{eq:sync-sparse-costs} and, therefore, respects forbidden matchings.

\section{Incomplete to complete transformation}
\label{sec:incomplete-transformation}
Here, we formally state and analyze the
\emph{polynomial-time transformation} from incomplete to complete MGM,
often mentioned without references or further elaboration in related 
works, \eg, \cite{yan2016constrained,yan2013joint,yan2015consistency}.
To transform an incomplete problem with objects $ V^p$, $ p\in[d] $, and
costs $ C^{p,q} $, $ p,q\in[d] $, into a complete one with objects 
of equal cardinalities, we introduce \emph{dummy} vertices, similar to the respective GM transformation in~\cite{haller2022comparative}.
More precisely, 
given that $|V|$ stands for the \emph{total} number of vertices in the original incomplete problem,
we add $ |V|-|V^p|\eqqcolon|\Delta^p| $
dummy vertices $ \Delta^p $ to each object $ V^p $, $p\in [d]$. As a result, we
construct a complete MGM problem with objects
\begin{equation}
	\widetilde{V}^{p} = V^p \cup \Delta^p, \quad  p \in[d]\,,
	\label{eq:complete-objects}
\end{equation}
and costs
\begin{equation}
	\widetilde{C}^{p,q}_{is,jt} =
	\begin{cases}
		C^{p,q}_{is,jt} &, \text{if $ i,s,j,t\in V  $ }\\
		0 &, \text{else}
	\end{cases} \,.
	\label{eq:complete-costs}
\end{equation}
In this way, each vertex $i\in V^p$, $p\in [d]$, of the incomplete problem receives
a dummy vertex $\delta\in\Delta^q$ in each object $q\in[d]\backslash\{p\}$ of the complete problem. 
These dummy vertices are used to turn each clique of the incomplete problem maximal \st they span all objects of the (complete) problem.
In turn, omitting matchings to dummy vertices defines the inverse transformation,
which leads us to the following

\begin{theorem}  \label{thm:transformation}
	Incomplete MGM is polynomial-time transformable to complete MGM.
\end{theorem}
The proof is based on the transformation \eqref{eq:complete-objects}-\eqref{eq:complete-costs} and given in \cref{suppl:details_incomplete_to_complete_transformation}.
%
While such a transformation could, in principle, be used to apply solvers for complete MGM to incomplete problems, doing so without further reasoning might be overly simplistic.
The transformation \eqref{eq:complete-objects}-\eqref{eq:complete-costs} adds 
$ \sum_{p\in[d]}(|V|-|V^p|)=(d-1)|V| $ dummy vertices, \ie,
$ d-1 $ times the size $|V|$ of the incomplete problem, 
which renders most existing algorithms impractical by considerably slowing them down.

Yet, the described transformation is \emph{minimal} within the class of \emph{clique-wise} transformations,
that is, the size of respective complete problems cannot be decreased in general:
\begin{definition}\label{def:general-transformation}
	A problem class $P^A$ is transformable in polynomial time to the problem class $P^B$,
	if (i) there exists a mapping $T_{inst}\colon P^A\to P^B$ and
	(ii) for any $A\in P^{A}$ there exists a mapping $T_{sol}$ from the 
	optimal solutions of $T_{inst}(A)$ to the optimal solutions of $A$, 
	where all mappings are computable in polynomial time.
\end{definition}

Henceforth, problem always means \emph{optimization} problem.
Being very general, \cref{def:general-transformation} does not consider several practical issues: First, an optimal solution is rarely known a priori.
Therefore, the mapping $T_{sol}$ is usually defined for \emph{all} solutions that can potentially be optimal.
For MGM problems, however, \emph{any} solution can potentially be optimal, as there always exist costs that make it optimal. 
Second, if approximate algorithms are used to solve the reduced problem, then the optimum-to-optimum mapping, as required in \cref{def:general-transformation}, is insufficient. Instead, the mapping $T_{sol}$ must be \emph{monotone} \wrt the objective values of the mapped feasible solutions: Feasible solutions with lower objective values in $T_{inst}(A)$ correspond to feasible solutions with lower objective values in $A$. These properties are summarized in the following

\begin{definition}
	We say that there is a \emph{robust} polynomial time \emph{transformation} of a problem class $P^A$ to the problem class $P^B$, if (i) there exists a mapping ${T_{inst}\colon P^A\to P^B}$ and (ii) for any $A\in P^{A}$ there exists a (\wrt the objective) \emph{monotone} mapping $T_{sol}$ from the feasible solutions of $T_{inst}(A)$ to the feasible solutions of $A$, where all mappings are computable in polynomial time.
\end{definition}

Let now $P^A$ and $P^B$ correspond respectively to the problem classes of incomplete and complete MGM.
Any feasible solution of an arbitrary MGM problem $A'$  is given by its vertex partition $\SQ^{A'}\in\BQ^{A'}$.
In the following, we define a subset of robust transformations, where each clique in the partition $\SQ^A$, $A\in P^A$, has its counterpart in the \emph{respective} partition $\SQ^B$, $B=T_{inst}(A)$:
\begin{definition}\label{def:clique-wise-transformation} 
	Assume $B=T_{inst}(A)$ and $\SQ^A:=T_{sol}(\SQ^B)$.
	We call the robust transformation $P^A\to P^B$ \emph{clique-wise},
	if the mapping $ T_{sol} $ is induced by a clique-wise mapping $ T_{clique}:\SQ^B\to\SQ^A 
	$ \st $ T_{clique}\big|_{\SQ^B\setminus T^{-1}_{clique}(\emptyset)} $ is injective,
	which directly implies $|\SQ^A| \le |\SQ^B|$.
\end{definition}
As it follows from the proof of \cref{thm:transformation}, the transformation \eqref{eq:complete-objects}-\eqref{eq:complete-costs} is clique-wise.
We are not aware of any non-clique-wise transformations, although cannot exclude their existence.
\cref{def:clique-wise-transformation} leads to the following 
\begin{proof}[Proof of \cref{thm:minimality-of-transformation}]
	Consider the feasible multi-matching in $A$ which leaves all vertices unmatched. This has $|\SQ^A|=|V|$ cliques representing individual vertices. On the other side, the number of (maximal) cliques in any (complete) MGM solution in $B$ is equal to the number of vertices in each of its objects. Hence, the latter cannot be less than $|V|$ due to \cref{def:clique-wise-transformation}.
\end{proof}
\begin{figure}[h]
  \centering
  \resizebox{\columnwidth}{!}{\input{floats/pgf/ablation-local-search.pgf}}
  \caption{\textbf{Convergence} of sequential and parallel \emph{GM local search} variants after sequential and parallel construction, respectively. 
  For the incremental construction only the objective value is given to keep readability of the plot. 
  Though taking notably more time, this method does not result in a better solution. 
  The result is a single run on the worms-10 dataset and averaged over the first 10 problem instances.
  }
  \label{fig:results-ablation-ls}
\end{figure}

\newpage
\section{Experimental validation}
\label{sec:experiments}
Our experiments comprise an \emph{ablation study} (\cref{sec:ablation-study}) and \emph{a comparison to other methods}, for \emph{direct MGM} (\cref{sec:comparison}) and \emph{sparse synchronization} (\cref{sec:synchronization-exp}).
Whereas the ablation study compares different variants of \emph{GREEDA}'s \emph{constructive heuristic} and \emph{GM-LS}, see \cref{fig:pipeline}, in the comparison to others, we only utilize \emph{sequential} variants of all algorithms to ensure a fair runtime comparison. 
Contrary to most existing works, we only compare \emph{objective values} instead of \emph{ground-truth-based accuracies} since none of the considered algorithms explicitly optimize the latter or has access to the ground truth.
We leave accuracy comparisons to the works addressing modeling and learning questions.
\Subsubsection{Datasets.}
We evaluate methods on the established datasets:
\emph{synthetic}, CMU  \emph{hotel} and \emph{house}, and \emph{worms}. All of them are widely used in MGM literature
and accessible through the archive accompanying~\cite{swoboda2022structured}.

Additionally, to showcase our approach, we create the \emph{worms-large} dataset \cite{worms_large_dataset}.
While the original \emph{worms} dataset considers only 3 to 10 worm objects per matching problem,  
\emph{worms-large} features problems with 20 to 100 objects%
\footnote{We also substituted the worms-29 dataset from~\cite{kahl2025towards} with worms-30.}%
:%
\begin{samepage}
\begin{itemize}
    \item \tikz[remember picture]{\coordinate (worms-20);} \emph{worms-20}
    \item \tikz[remember picture]{\coordinate (worms-30);} \emph{worms-30}
    \item \tikz[remember picture]{\coordinate (worms-50);} \emph{worms-50}
    \item \tikz[remember picture]{\coordinate (worms-100);} \emph{worms-100}
\end{itemize}
\begin{tikzpicture}[remember picture, overlay]
    \draw [decorate, decoration={brace, amplitude=8pt}]
        ([xshift=55pt, yshift=5pt]worms-20) -- ([xshift=55pt, yshift=-1pt]worms-30)
        node[midway, right=8pt] {$\sim\!510\text{-}570$ vertices each};
    \draw [decorate, decoration={brace, amplitude=8pt}]
        ([xshift=55pt, yshift=5pt]worms-50) -- ([xshift=55pt, yshift=-1pt]worms-100)
        node[midway, right=8pt] {\;\;\;\;\;\;$558$ vertices each};
\end{tikzpicture}
\end{samepage}
Solving problems of this scale became necessary as part of a related research line~\cite{karg2025fully} focused on the unsupervised training of matching cost hyperparameters via Bayesian optimization. 
This prompted us to create and publish this dataset to facilitate benchmark comparisons.
Compared to the \emph{worms} dataset, the \emph{worms-large} instances are significantly more sparse: on average, each vertex is restricted to 14 potential matches instead of 23. 
This degree of sparsity was chosen to substantially reduce problem size and runtime without compromising the solution quality of downstream matching tasks. 
Furthermore, the distinct hyperparameter configurations used for these instances result in different cost values compared to the original \emph{worms} dataset.

With approximately 45M keypoint correspondences (compared to 3M in the original \emph{worms-29} instance from~\cite{kahl2025towards}), the \emph{worms-100} instances represent the largest MGM problems considered in computer vision to date. To promote further research in this regime, we publish the \emph{worms-large} dataset by adding it to the benchmark collection of~\cite{swoboda2022structured}.
It is also directly accessible under \url{https://doi.org/10.11588/DATA/RTINUU}.

\subsection{Ablation study}
\label{sec:ablation-study}
Since the ablation study focuses on the underlying meta-heuristic rather than a particular instantiation, we restrict our analysis to \emph{GREEDA} and extend conclusions to \emph{GREEDA (Sync)}.
We compare all combinations of the \emph{sequential}, \emph{parallel}, and \emph{incremental} solution construction methods from \cref{sec:construction} with the \emph{sequential} and \emph{parallel} \emph{GM local search} methods from \cref{sec:qap-local-search}.
For lack of a better solver, \emph{incremental} construction uses \emph{GREEDA} for the first 5 objects.
To showcase our parallelization, we additionally consider a straightforward \emph{``best-improvement''} parallelization of the GM local search:
Although it performs all GM-LS steps in parallel, it accepts only the most profitable one.
In this respect it differs from the parallel local search as described in \cref{sec:qap-local-search}, which aims to accept \emph{all possible} profitable changes.
The experiments were run on \emph{worms-10}, containing the largest MGM instances addressed in computer vision~\cite{swoboda2019convex} so far,
using 10 processing cores in the parallel setting.
Further details about the experimental setup and solver settings are provided in \cref{suppl:experiment-setup}.

\Subsubsection{Convergence.}
\cref{fig:results-ablation-ls} depicts costs with respect to runtime.
While all methods converge to virtually the same value, 
\emph{parallel} GM-LS converges fastest.
It also notably outperforms the \emph{``best-improvement''} parallelization.
\begin{figure*}[!t]
	\resizebox{\textwidth}{!}{\input{floats/pgf/cost-distribution.pgf}}%
	\caption{
		\textbf{Boxplots of cost distributions} from 100 runs averaged over 10 \emph{worms-10} instances after our \emph{sequential}, \emph{parallel}, or \emph{incremental} construction (see \cref{sec:construction}), and after additionally applying our \emph{sequential} or \emph{parallel} GM local search ($+$~LS) to each (see \cref{sec:qap-local-search}).
	}
	\label{fig:results-cost-distribution}%
\end{figure*}


\Subsubsection{Cost Distributions.} 
To assess the influence of randomization, we run each algorithm 100 times using different object orderings. \cref{fig:results-cost-distribution} shows box plots of the resulting cost distributions. Differences across construction variants vanish after local search. Hence, the extra runtime of \emph{incremental} construction (see \cref{fig:results-ablation-ls}) is hard to justify. In contrast, \emph{parallel} construction's improved scaling can be fully utilized because the local search compensates the cost difference to the \emph{sequential} construction.

\subsection{Comparison to other methods: Direct MGM}
\label{sec:comparison}
\Subsubsection{Algorithms.}
We compare our method \emph{GREEDA} to the state-of-the-art MGM methods \emph{MP-T}~\cite{swoboda2019convex} and \emph{DS*}~\cite{bernard2018ds}. 
The latter only applies to complete and dense MGM problems, while the former also applies to incomplete and sparse MGM problems. 
We also include the synchronization methods \emph{PPI}, \emph{Stiefel}, and \emph{Spectral}, see \cref{sec:synchronization-exp}.

We additionally evaluated the \emph{Cao}~\cite{yan2015multi} and \emph{Floyd}~\cite{jiang2020unifying} algorithms from the \emph{pygmtools} package~\cite{wang2024pygmtools}. 
Like \emph{DS*}, they are restricted to complete, dense MGM problems. 
However, their current implementation does not enforce cycle-consistency and their solutions were, in our experiments, in general cycle-inconsistent.
We therefore exclude them from the main comparison and report their results separately in \cref{suppl:detailed-results}. 
These results show that their solutions are still inferior to \emph{GREEDA} in terms of objective value despite being cycle-inconsistent.

\Paragraph{Implementation Details.}
We use the original author implementations for both \emph{DS*} (Matlab) and \emph{MP-T} (C++). \emph{GREEDA} is also implemented in C++. Details regarding the synchronization methods can be found in \cref{sec:synchronization-exp}.

\begin{figure*}[!t]
  \captionsetup[subfloat]{margin={35pt, 0pt}}
  \centering
  \resizebox{\textwidth}{!}{\input{floats/pgf/fig-compare-legend.pgf}}%
  \\
  \subfloat[]{%
      \input{floats/pgf/fig-compare-a-worms30-full.pgf}%
    \label{subfig:results-compare-a}%
  }\hfill
  \subfloat[]{%
      \input{floats/pgf/fig-compare-b-worms30-zoomed.pgf}%
    \label{subfig:results-compare-b}%
  }\hfill
  \\
  \subfloat[]{%
      \input{floats/pgf/fig-compare-c-synthetic-density-12.pgf}%

    \label{subfig:results-compare-c}%
  }\hfill
  \subfloat[]{%
      \input{floats/pgf/fig-compare-d-hotel-8.pgf}%
    \label{subfig:results-compare-d}%
  }

  \caption{
    \textbf{Method Comparison.}
    Comparison of objective value \wrt runtime.
    The objective is plotted on a log scale and offset by the cycle-\emph{inconsistent} solution's objective $C^\mathrm{inc}$, obtained by independently solving the $d(d-1)/2$ pairwise GM problems.
    This value approximates, since \cite{hutschenreiter_fusionmoves_2021} is itself approximative, a lower bound to Eq.~\eqref{eq:MGM} given by the MGM relaxation that ignores cycle consistency.\newline
    For the sequential variant of \emph{GREEDA}, we report the best and worst result over 10 runs.
    \cref{subfig:results-compare-b} shows the first minutes of \cref{subfig:results-compare-a}, illustrating the considerable difference in construction time between \emph{GREEDA} and \emph{MP-T} on large problems.
    In both, we only compare methods that can find allowed solutions for sparse problems.
    \cref{subfig:results-compare-c} and \cref{subfig:results-compare-d} report results averaged over all instances of the 12-object \emph{synthetic density} and the 8-object \emph{hotel} datasets, respectively.
    Since the preprocessing stage of synchronization methods makes runtime comparisons with direct MGM methods difficult, we report only their objective values.
    \emph{DS*} results are shown only in \cref{subfig:results-compare-c}, since the other datasets contain incomplete problems.
    Note that even if \emph{GREEDA} is run 10 times sequentially to obtain the result of \emph{GREEDA (best)}, it remains the fastest algorithm.
    %
  }
  \label{fig:results-compare}
\end{figure*}

\Subsubsection{Results.} 
For the \emph{worms-50} and \emph{worms-100} datasets, the only true competitor, \emph{MP-T}, runs out of memory on a machine with 236GB of RAM.
By contrast, \emph{GREEDA} finds a feasible solution after 8 minutes on \emph{worms-50} and 30 minutes on \emph{worms-100}, and converges after about 2 hours using 25GB of RAM and after about 16 hours using 100GB of RAM, respectively.

Otherwise, for brevity, we only provide the comparison for three representative datasets, and refer to \cref{suppl:detailed-results} for a full list of results with all algorithms and datasets (in total 331 problem instances).

\cref{fig:results-compare} shows the costs-runtime plots for different methods. 
Its counterpart in the preliminary version \cite{kahl2025towards} mistakenly reports results for \emph{GREEDA} without SWAP-LS. Here, we show results for the full method, including the improved SWAP-LS.
For synchronization methods, we consider only the final objective value, as the need to solve $ d(d-1)/2 $ pairwise problems during pre-processing makes runtime comparisons with \emph{direct} MGM methods difficult.
Again, \emph{GREEDA} consistently outperforms or is on par with competitors in terms of final objective value, and is an order of magnitude faster in terms of runtime. 
In particular, for our new, large-scale \emph{worms-30} dataset, see \cref{subfig:results-compare-a}, \emph{GREEDA} achieves a better solution much faster than the direct competitor \emph{MP-T}, improving construction time from half an hour to just two minutes (see \cref{subfig:results-compare-b}).
While \emph{MP-T} fails to improve upon its initial solution significantly, our local search subroutines \emph{GM-LS} and \emph{SWAP-LS} quickly and significantly improve its result.
\cref{subfig:results-compare-c} includes a comparison to \emph{DS*}, which yields high-quality results but is substantially slower than the other methods, partially due to its Matlab implementation. Since it is a solver for complete problems, it cannot be applied to the \emph{hotel}, \emph{house}, or \emph{worms} datasets, as the incomplete to complete problem transformation would increase the runtime even further, see \cref{sec:incomplete-transformation}.
All plots in \cref{fig:results-compare} additionally show the application of \emph{GM-LS} and \emph{SWAP-LS} to the results of the other methods.
Although the resulting objective values are at times comparable to \emph{GREEDA}, optimization takes significantly longer.
As mentioned in \cref{sec:related_work}, synchronization methods may introduce blunders that substantially impact the resulting cost. This can be seen in \cref{subfig:results-compare-d}, where all synchronization methods return a solution that is worse than the trivial one, that would leave all vertices unmatched. Note, that the corresponding problem instance is \emph{dense}. 
For \emph{sparse} synchronization, see \cref{sec:synchronization-exp}.
\subsection{Comparison to other methods: Synchronization}
\label{sec:synchronization-exp}
Here, we compare algorithms \wrt the synchronization objective \cref{eq:sync-dense-costs}.

\Subsubsection{Algorithms.}
%
We compare our sparse synchronization algorithm \emph{GREEDA (Sync)} to the following synchronization algorithms:
The projected power iteration (\emph{PPI}) \cite{chen2018projected}, \emph{Stiefel} Synchronization \cite{bernard2021sparse}, and the \emph{Spectral} approach \cite{pachauri2013solving} extended by the Successive Block Rotation Algorithm~\cite{bernard2019synchronisation} to yield incomplete matchings.
We also compare to the conceptually similar constructive heuristics based on arbitrary (\emph{MST}) \cite{tang2017initialization} and star-shaped (\emph{MST-Star}) \cite{yan2015consistency} spanning trees. 
Additionally, we adapt \emph{MP-T}~\cite{swoboda2019convex} as a synchronization method \emph{MP-T (Sync)} by applying it to the Multi-LAP~\eqref{eq:sync} with sparsity-aware costs~\eqref{eq:sync-sparse-costs}.

Except for \emph{MP-T (Sync)}, no competitor guarantees to return only allowed matchings for sparse problems. 

For all synchronization methods, we solve the (potentially sparse) GM subproblems arising in the preprocessing stage using the state-of-the-art GM solver~\cite{hutschenreiter_fusionmoves_2021,fusionmovesprojectpage}, which we turn from a randomized algorithm into a deterministic one by fixing its randomization seed.
Further details about the experimental setup and solver settings are given in \cref{suppl:experiment-setup}.

\Paragraph{Implementation Details.}
While \emph{GREEDA (Sync)} and \emph{MP-T (Sync)} are available in C++, \emph{Stiefel}, and \emph{Spectral} are Matlab implementations. We use the original software from the authors for all but \emph{PPI}, \emph{MST}, and \emph{MST-Star}, which we re-implemented ourselves in Python.

\begin{figure*}[t!]
  \captionsetup[subfloat]{margin={45pt, 0pt}}
  \centering%
  \input{floats/pgf/sync-scatter/fig-sync-compare-legend.pgf}%
  \\
  \subfloat[]{%

      \input{floats/pgf/sync-scatter/fig-sync-compare-hotel-16.pgf}%
    \label{subfig:results-sync-compare-a}%
  }\hfill
  \subfloat[]{%
      \input{floats/pgf/sync-scatter/fig-sync-compare-house-16.pgf}%
    \label{subfig:results-sync-compare-b}%
  }\hfill
  \subfloat[]{%
      \input{floats/pgf/sync-scatter/fig-sync-compare-synthetic-density-16.pgf}%
    \label{subfig:results-sync-compare-c}%
  }\hfill
  \subfloat[]{%
      \input{floats/pgf/sync-scatter/fig-sync-compare-synthetic-deform-16.pgf}%
    \label{subfig:results-sync-compare-d}%
  }

  \caption{
    \textbf{Synchronisation Comparison on Smaller, Dense Problems.}
    Comparison of the \emph{sparse} \eqref{eq:sync-sparse-costs}, or equivalently, since all instances are dense, the \emph{standard} \eqref{eq:sync-dense-costs} synchronization MLAP objective  \wrt total runtime, averaged over all instances of the respective datasets.
    All methods except \emph{MST} and \emph{MST-Star} directly optimize the MLAP objective \eqref{eq:sync-dense-costs}.
    The dashed line marks the end of the preprocessing stage, in which all GM subproblems are solved independently.
  }
  \label{fig:sync-scatter-plot}
\end{figure*}
%
%
%

\begin{table}
  \centering
  \caption{
    Results for the sparse synchronization task \eqref{eq:sync-sparse-costs} \normalfont on \emph{worms-10}, averaged over 10 instances, see \cref{sec:synchronization}.
    We report the Multi-LAP (\emph{M-LAP}), eq.~\eqref{eq:sync-dense-costs}, and \emph{MGM}, eq.~\eqref{eq:MGM} objectives, as well as the number of forbidden matchings (\emph{\#forb.}) in the final solution.
    Since neither \emph{PPI}, \emph{Spectral}, nor \emph{Stiefel} are able to deal with the sparse synchronization problem~\eqref{eq:sync-sparse-costs} directly, we resort to \emph{soft constraints} by replacing infinite $N^{p,q}_{is,is}$ values in~\eqref{eq:sync-sparse-costs} with a finite value $\alpha>0$, for $ \alpha\in \left\{ 1,10,50 \right\} $.
    Among the three, only \emph{PPI} is able to reduce the number of forbidden matchings \emph{\# forb.} significantly.
    Nevertheless, only \emph{MP-T(Syn)} and \emph{GREEDA(Syn)} are able to obey sparsity constraints, which renders the MGM objective infinite for all other methods.
    As baselines, we also list results of direct optimization algorithms \emph{MP-T} and \emph{GREEDA}, the latter over 10 runs to account for randomization.
  }
    \begin{tabular}{clcc}
      \toprule
      Inf.Value & Method & M-LAP\textdownarrow  & \# forb./MGM\textdownarrow  \\
      \cmidrule{1-4}
      \multirow{6}{*}{\shortstack{$ \alpha = 0 $ \\(\cref{eq:sync-dense-costs})}}
                       & PPI  \cite{chen2018projected} & -18428 & 791\phantom{00}/\phantom{00}$\infty$ \\
                       & Spectral  \cite{pachauri2013solving,bernard2019synchronisation} & -18521 & 713\phantom{00}/\phantom{00}$\infty$ \\
                       & Stiefel  \cite{bernard2021sparse} & -18461 & 580\phantom{00}/\phantom{00}$\infty$ \\
                       & MP-T (Sync)  \cite{swoboda2019convex} & -18497 & 599\phantom{00}/\phantom{00}$\infty$ \\
                       & GREEDA (Sync)(worst) & -18622 & 755\phantom{00}/\phantom{00}$\infty$ \\
                       & GREEDA (Sync)(best) & \bfseries -18669 & 692\phantom{00}/\phantom{00}$\infty$ \\
      \hline
      \hline
      \multirow{3}{*}{$\alpha = 1$}
                       & PPI  \cite{chen2018projected} & \textit{-18390} & 275\phantom{00}/\phantom{00}$\infty$ \\
                       & Spectral  \cite{pachauri2013solving,bernard2019synchronisation} & -3913 & 15426/\phantom{00}$\infty$ \\
                       & Stiefel  \cite{bernard2021sparse} & -14213 & 2493\phantom{0}/\phantom{00}$\infty$ \\
      \hline
      \multirow{3}{*}{$\alpha = 10$}
                       & PPI  \cite{chen2018projected} & \textit{-17576} & 303\phantom{00}/\phantom{00}$\infty$ \\
                       & Spectral  \cite{pachauri2013solving,bernard2019synchronisation} & -1118 & 17745/\phantom{00}$\infty$ \\
                       & Stiefel  \cite{bernard2021sparse} & -10531 & 3663\phantom{0}/\phantom{00}$\infty$ \\
      \hline
      \multirow{3}{*}{$\alpha = 50$}
                       & PPI  \cite{chen2018projected} & \textit{-16586} & 389\phantom{00}/\phantom{00}$\infty$ \\
                       & Spectral  \cite{pachauri2013solving,bernard2019synchronisation} & -1061 & 17731/$\phantom{00}\infty$ \\
                       & Stiefel  \cite{bernard2021sparse} & -10700 & 3643\phantom{0}/\phantom{00}$\infty$ \\
      \cmidrule{1-4}
      \multirow{5}{*}{\shortstack{ $ \alpha = \infty $ \\ (\cref{eq:sync-sparse-costs})}}
                       & MST  \cite{tang2017initialization} & -14740 & 360\phantom{00}/\phantom{00}$\infty$ \\
                       & MST-Star  \cite{yan2015consistency} & -16349 & 213\phantom{00}/\phantom{00}$\infty$ \\
                       & MP-T (Sync) & -18333 & {\bfseries 0}/-3108350 \\
                     & GREEDA (Sync)(worst) & -18451 & {\bfseries 0}/-3132603 \\
                     & GREEDA (Sync)(best) & \bfseries -18510 & \bfseries 0/-3136748 \\
      \hline
      \hline
      baseline & MP-T  \cite{swoboda2019convex} & -  & {\bfseries 0}/-2393022 \\
               & GREEDA (worst) & -  & {\bfseries 0}/-3201256 \\
               & GREEDA (best) & -  & \bfseries 0/-3208824 \\
      \hline
      \bottomrule
    \end{tabular}
  \label{table:sync_worms_10}
\end{table}

\Subsubsection{Results.}
Quantitative comparisons on the large-scale, sparse \emph{worms-10} instances are summarized in \cref{table:sync_worms_10}, see \cref{suppl:detailed-sync-results} for detailed results. 
\emph{GREEDA (Sync)} achieves the best performance across the reported metrics. 
The synchronization methods \emph{PPI}, \emph{Spectral}, and \emph{Stiefel} consistently produce forbidden matchings in all evaluated experiments, even when employing soft constraints that penalize infinite costs with a positive constant $\alpha > 0$. 
Similarly, the spanning-tree based methods \emph{MST} and \emph{MST-Star} fail to avoid forbidden matchings even for $\alpha=\infty$. 
Moreover, the objective value attained in this setting is inferior to those of \emph{GREEDA~(Sync)} and \emph{MP-T~(Sync)}, even with respect to the dense synchronization objective~\eqref{eq:sync-dense-costs}.

Additionally, we observed that using synchronization solutions, including those containing forbidden matchings, as starting points for the \emph{GREEDA} local search not only yields \emph{allowed} multi-matchings, but also converges to highly consistent MGM objective values~\eqref{eq:MGM}, regardless of the initialization.
We attribute this robustness to the ability of the local search to efficiently explore the exponentially large neighborhood of a current solution. This effectiveness, in turn, stems from the efficiency of the underlying graph matching solver~\cite{hutschenreiter_fusionmoves_2021} utilized in our method.

%
\cref{fig:sync-scatter-plot} shows the synchronization Multi-LAP objective \eqref{eq:sync-dense-costs} against runtime for all methods on the small-scale, dense datasets \emph{house-16}, \emph{hotel-16}, \emph{synthetic deform-16}, and \emph{synthetic density-16}, averaged over all 10 instances of each dataset.
Since all problems are dense, the \emph{sparse} synchronization Multi-LAP objective \eqref{eq:sync-sparse-costs} coincides with the \emph{standard} synchronization objective \eqref{eq:sync-dense-costs}, and all methods%
\footnote{\emph{Spectral} optimizes the Hamming distance; for dense problems, this is equivalent to~\eqref{eq:sync-dense-costs}.}
except \emph{MST} and \emph{MST-Star} optimize this objective explicitly.
\emph{GREEDA (Sync)} consistently outperforms the dedicated synchronization methods \emph{PPI}, \emph{Stiefel}, and \emph{Spectral} in terms of objective value.
The spanning-tree based heuristics \emph{MST} and \emph{MST-Star} underperform with respect to the Multi-LAP objective \eqref{eq:sync-dense-costs},
though they achieve better values for the MGM objective \eqref{eq:MGM}, for which they partially optimize, see \cref{suppl:detailed-sync-results} for details. 
The only method that provides competitive results \wrt the Multi-LAP objective value \eqref{eq:sync-dense-costs} is the synchronization adaptation of \emph{MP-T}, namely \emph{MP-T (Sync)}.
In terms of objective value, \emph{GREEDA (Sync)} outperforms \emph{MP-T (Sync)} on all datasets except \emph{hotel} and \emph{house}, where it is consistently second best and trails \emph{MP-T (Sync)} only by a small margin, while being at least half an order of magnitude faster.
For example, on \emph{hotel-16}, the average percentage difference relative to \emph{MP-T} is $0.84\%$ for \emph{GREEDA (Sync)(best)}, $1.69\%$ for \emph{GREEDA (Sync)(worst)}, and $3.59\%$ for the closest remaining competitor, \emph{PPI}.
In terms of runtime, \emph{GREEDA (Sync)} outperforms \emph{MP-T (Sync)} by an order of magnitude on most datasets, with exceptions limited to small or simple cases (\emph{hotel-4}, \emph{house-4}, \emph{synthetic outlier-4 \& -8}, and \emph{synthetic complete}).
We provide results for all datasets in \cref{suppl:detailed-sync-results}.

Overall, \emph{GREEDA (Sync)} consistently offers the best trade-off between Multi-LAP objective value \eqref{eq:sync-sparse-costs} and runtime across the evaluated datasets.
In particular, on the large \emph{worms} instances, it consistently outperforms its only competitor in terms of objective value while being an order of magnitude faster.
%


\FloatBarrier
\section{Conclusions and Future Work}
\label{sec:conclusion}

In this paper, we addressed the challenging problem of large-scale multi-graph matching under sparsity and incompleteness constraints, a setting motivated by real-world bioimaging applications. We introduced GREEDA, an efficient solver that operates natively on sparse and incomplete problem formulations, as direct or synchronization method. By decomposing the heavy multi-graph assignment task into localized graph matching subroutines, GREEDA successfully scales to hundreds of objects while maintaining cycle consistency and yielding state-of-the-art accuracy at an execution speed that is orders of magnitude faster than existing competitors.

A promising direction remains for future exploration. As discussed in \cref{sec:intro}, a modern approach to MGM includes learning the matching costs directly from data using neural networks. Integrating GREEDA's exceptionally fast solving routine directly within such an end-to-end learning loop represents an exciting path forward. This could unlock more accurate and unsupervised correspondences in biological volumetric scans while operating inside the tight time constraints typically required by training cycles.

\section*{Acknowledgements}
This work was supported by the German Research Foundation projects 498181230 and 539435352. Authors further acknowledge facilities for high throughput calculations bwHPC of the state of Baden-Württemberg (DFG grant INST 35/1597-1 FUGG) as well as Center for Information Services and High Performance Computing (ZIH) at TU Dresden.
\bibliographystyle{IEEEtran}
\bibliography{main}
\newpage
\begin{IEEEbiographynophoto}{Sebastian Stricker}
received the BSc degree in computer science from FH Technikum Wien, Vienna, Austria, in 2019, and the MSc degree in scientific computing from Heidelberg University, Heidelberg, Germany, in 2023. 
He is currently pursuing the PhD degree with the Computer Vision and Learning Lab at the Interdisciplinary Center for Scientific Computing (IWR), Heidelberg University, under the supervision of Dr. Bogdan Savchynskyy.
\end{IEEEbiographynophoto}
\begin{IEEEbiographynophoto}{Max Kahl}
received the BSc degree in physics, the MSc degree in scientific computing, and the MSc degree in mathematics from Heidelberg University, Germany, in 2021, 2023, and 2024, respectively. 
He is currently working toward the PhD degree with the Computer Vision and Machine Learning Group at the Max Planck Institute for Informatics. 
His research interests include combinatorial optimization, in particular multi-graph matching, and neural algorithmic reasoning.
\end{IEEEbiographynophoto}
\begin{IEEEbiographynophoto}{Lisa Hutschenreiter}
completed her Master's degree in mathematics at Dresden University of Technology in 2015. 
After graduation until 2017 she worked on parametric Markov chains in the research group "Algebraic and Logic Foundations of Computer Science" led by Prof. Dr. Christel Baier. 
Lisa is currently a PhD student at the Computer Vision and Learning Lab at Heidelberg University, supervised by Prof. Dr. Carsten Rother.
\end{IEEEbiographynophoto}
\begin{IEEEbiographynophoto}{Florian Bernard}
is Associate Professor at the University of Bonn, where he heads the Learning and Optimization for Visual Computing group. 
Before that, he held positions as Visiting Professor at the chair of Computer Vision \& Artificial Intelligence at the Technical University of Munich, as postdoctoral researcher in the Graphics, Vision and Video group at the Max-Planck-Institute for Informatics, and at the University of Luxembourg, where he also received his Ph.D. degree. 
He is associate editor for TPAMI and regularly serves as area chair and reviewer for top-tier conferences (including CVPR, ICCV, etc.). 
He has received numerous awards, including an ERC Starting Grant, a CVPR 2024 best student paper runner-up award and the BVM Award for his Ph.D. thesis.
\end{IEEEbiographynophoto}
\begin{IEEEbiographynophoto}{Carsten Rother}
received the diploma degree in 1999 from the University of Karlsruhe, and the PhD degree in 2003 from the Royal Institute of Technology Stockholm. 
From 2003 until 2013 he was researcher with Microsoft Research Cambridge, UK. 
From 2014 until 2017 he was full (W3) Professor at TU Dresden, and since September 2017 he is full Professor at Heidelberg University, heading the Computer Vision and Learning Lab. 
His research interests are in the field of computer vision with a broad range of applications — such as image editing and generation, as well as 3D reconstruction. 
He has received numerous awards, among others at CVPR 2013 and 2005, BMVC 2016 and 2012, ACCV 2014 and 2010, and CHI 2007. He received an ERC Consolidator Grant and was awarded the DAGM Olympus prize in 2009. 
He has co-developed two Microsoft products, GrabCut for Office 2010 and AutoCollage. 
\end{IEEEbiographynophoto}
\begin{IEEEbiographynophoto}{Bogdan Savchynskyy}
received his MS degree in applied mathematics from the National Technical University of Ukraine Kyiv Polytechnic Institute, in 2002, and the PhD degree in computer science from the National Academy of Sciences of Ukraine, in 2007. 
He completed his habilitation in computer science at Heidelberg University, in 2022. 
He held postdoctoral and research positions at Heidelberg University and Dresden University of Technology. 
Since 2017, he has been a group leader at the Computer Vision and Learning Lab, Heidelberg University. 
His research interests include large-scale combinatorial and continuous optimization, as well as their applications to computer vision and bioimaging.
\end{IEEEbiographynophoto}
\ifincludesupplement
    \clearpage
    \onecolumn
\setcounter{page}{1}
\appendices
\crefalias{section}{appendix}
\crefalias{subsection}{appendix}

\begin{center}
	\vspace*{0.5\baselineskip}
	{\LARGE\bfseries Supplementary Material\par}
	\vspace{1.5\baselineskip}
\end{center}

\titlecontents{section}
[0.0em] 
{\vspace{.5\baselineskip}} 
{\thecontentslabel\protect\hspace{1em}} 
{} 
{\titlerule*[0.5pc]{.}\contentspage} 

\setcounter{figure}{0}
\setcounter{table}{0}
\setcounter{algocf}{0}

\counterwithin{figure}{section}
\counterwithin{table}{section}
\counterwithin{algocf}{section}

\startcontents[appendix-toc]
\section*{Table of Contents}
\printcontents[appendix-toc]{}{0}[1]{}

\FloatBarrier

\section{Feasible solution construction: Details}
\label{suppl:construction}

\Subsubsection{Merge operation.}
The merge operation between two partial solutions
$ \SA\in\BQ^{D_A}  $ and $ \SB\in\BQ^{D_B} $ 
of disjoint $ D_A \cap D_B=\emptyset $ object subsets
$ D_A,D_B\subset[d] $
specified via the matching $ E\in\SA\times\SB $ is formally defined as
\begin{equation}
  \merge(\SA,\SB;E)
  \coloneqq 
  \{A\cup B \, | \, AB\in E \}
  \ \cup \
  \{A \, | \, \forall B\in\SB: AB\notin E \}
  \ \cup \
  \{B \, | \, \forall A\in\SA: AB\notin E \}   \,.
  \label{eq:merge}
\end{equation}
The merge between an object $ V^p $ and a partial solution 
$ \SQ\in\BQ^D $, $ p\notin D\subset[d] $, specified by the matching 
$ E\subset V^p\times\SQ $ is a special case
\begin{equation}
  \merge(V^p,\SQ;E)
  \coloneqq
  \merge(
  \SV^p, \SQ; E') \,,
  \label{eq:single-object-merge}
\end{equation}
where
$ \SV^p = \left\{ \{i\} \,|\, i\in V^p \right\} $
and 
$ E' = \left\{ \{i\}\cup Q \,|\, iQ\in E \right\} $.
\Subsubsection{Extensions and parallelization.}
\begin{figure}
  \centering
		\subfloat[][Basic]{%
			\adjustbox{valign=b, scale=0.75}{\begin{tikzpicture}[
  vertex/.style = {shape=circle,draw, inner sep=0.04cm},
  blacked/.style = {shape=circle,draw,fill=black,text=black, inner sep=0.06cm},
  edge/.style = {-, dashed, cvprgray},
  greenvertex/.style = {draw=cvprgreen,shape=circle, text=black, line width=0.3mm, inner sep=0.04cm},
  bluevertex/.style = {draw=cvprblue,shape=circle, text=black, line width=0.3mm, inner sep=0.04cm},
  orangevertex/.style = {draw=cvprorange,shape=circle, text=black, line width=0.3mm, inner sep=0.04cm},
  greendummy/.style = {regular polygon, regular polygon sides=4, draw=cvprgreen,  line width=0.3mm, inner sep=-0.01cm},
  bluedummy/.style = {regular polygon,  regular polygon sides=4, draw=cvprblue,   line width=0.3mm, inner sep=-0.01cm},
  orangedummy/.style = {regular polygon,regular polygon sides=4, draw=cvprorange, line width=0.3mm, inner sep=-0.01cm},
]

\begin{scope}[scale=0.5]
    \node[vertex] (1) at (0,0) {$ 1 $};
    \node[vertex] (2) at (2,0) {$ 2 $};
    \node[vertex] (3) at (3,1) {$ 3 $};
    \node[vertex] (4) at (4,2) {$ 4 $};
    \node[vertex] (5) at (5,3) {$ 5 $};
    \node[vertex] (6) at (6,4) {$ 6 $};
    \node[vertex] (7) at (7,5) {$ 7 $};

    \node[blacked] (int_1) at (1,1) {};
    \node[blacked] (int_2) at (2,2) {};
    \node[blacked] (int_3) at (3,3) {};
    \node[blacked] (int_4) at (4,4) {};
    \node[blacked] (int_5) at (5,5) {};
    \node[blacked] (int_6) at (6,6) {};

    \draw (1) to (int_1);
    \draw (2) to (int_1);
    \draw (3) to (int_2);
    \draw (4) to (int_3);
    \draw (5) to (int_4);
    \draw (6) to (int_5);
    \draw (7) to (int_6);
    \draw (int_1) to (int_2);
    \draw (int_2) to (int_3);
    \draw (int_3) to (int_4);
    \draw (int_4) to (int_5);
    \draw (int_5) to (int_6);
\end{scope}

\end{tikzpicture}}%
			\label{fig:construction-tree-basic}%
		}%
    \hfill
		\subfloat[][Parallel]{%
			\adjustbox{valign=b, scale=0.75}{\begin{tikzpicture}[
  vertex/.style = {shape=circle,draw, inner sep=0.04cm},
  blacked/.style = {shape=circle,draw,fill=black,text=black, inner sep=0.06cm},
  edge/.style = {-, dashed, cvprgray},
  greenvertex/.style = {draw=cvprgreen,shape=circle, text=black, line width=0.3mm, inner sep=0.04cm},
  bluevertex/.style = {draw=cvprblue,shape=circle, text=black, line width=0.3mm, inner sep=0.04cm},
  orangevertex/.style = {draw=cvprorange,shape=circle, text=black, line width=0.3mm, inner sep=0.04cm},
  greendummy/.style = {regular polygon, regular polygon sides=4, draw=cvprgreen,  line width=0.3mm, inner sep=-0.01cm},
  bluedummy/.style = {regular polygon,  regular polygon sides=4, draw=cvprblue,   line width=0.3mm, inner sep=-0.01cm},
  orangedummy/.style = {regular polygon,regular polygon sides=4, draw=cvprorange, line width=0.3mm, inner sep=-0.01cm},
]
\begin{scope}[scale=0.5]
    \node[vertex] (1) at (0,0) {$ 1 $};
    \node[vertex] (2) at (2,0) {$ 2 $};
    \node[vertex] (3) at (3,0) {$ 3 $};
    \node[vertex] (4) at (5,0) {$ 4 $};
    \node[vertex] (5) at (6,0) {$ 5 $};
    \node[vertex] (6) at (8,0) {$ 6 $};
    \node[vertex] (7) at (10,1) {$ 7 $};

    \node[blacked] (int_1) at (1,1) {};
    \node[blacked] (int_2) at (4,1) {};
    \node[blacked] (int_3) at (7,1) {};
    \node[blacked] (int_4) at (2.5,2) {};
    \node[blacked] (int_5) at (8.5,2) {};
    \node[blacked] (int_6) at (5.5,3) {};

    \draw (1) to (int_1);
    \draw (2) to (int_1);
    \draw (3) to (int_2);
    \draw (4) to (int_2);
    \draw (5) to (int_3);
    \draw (6) to (int_3);
    \draw (7) to (int_5);
    \draw (int_1) to (int_4);
    \draw (int_2) to (int_4);
    \draw (int_3) to (int_5);
    \draw (int_4) to (int_6);
    \draw (int_5) to (int_6);
\end{scope}
\end{tikzpicture}}%
			\label{fig:construction-tree-parallel}%
		}%
    \hfill
		\subfloat[][Incremental]{
			\adjustbox{valign=b, scale=0.75}{\begin{tikzpicture}[
  vertex/.style = {shape=circle,draw, inner sep=0.04cm},
  blacked/.style = {shape=circle,draw,fill=black,text=black, inner sep=0.06cm},
  edge/.style = {-, dashed, cvprgray},
  greenvertex/.style = {draw=cvprgreen,shape=circle, text=black, line width=0.3mm, inner sep=0.04cm},
  bluevertex/.style = {draw=cvprblue,shape=circle, text=black, line width=0.3mm, inner sep=0.04cm},
  orangevertex/.style = {draw=cvprorange,shape=circle, text=black, line width=0.3mm, inner sep=0.04cm},
  greendummy/.style = {regular polygon, regular polygon sides=4, draw=cvprgreen,  line width=0.3mm, inner sep=-0.01cm},
  bluedummy/.style = {regular polygon,  regular polygon sides=4, draw=cvprblue,   line width=0.3mm, inner sep=-0.01cm},
  orangedummy/.style = {regular polygon,regular polygon sides=4, draw=cvprorange, line width=0.3mm, inner sep=-0.01cm},
]
\begin{scope}[scale=0.5]
    \node[vertex] (1) at (0,0) {$ 1 $};
    \node[vertex] (2) at (1,0) {$ 2 $};
    \node[vertex] (3) at (2,0) {$ 3 $};
    \node[vertex] (4) at (3,0) {$ 4 $};
    \node[vertex] (5) at (3.5,1) {$ 5 $};
    \node[vertex] (6) at (4.5,2) {$ 6 $};
    \node[vertex] (7) at (5.5,3) {$ 7 $};

    \node[blacked] (int_1) at (1.5,1) {};
    \node[blacked] (int_2) at (2.5,2) {};
    \node[blacked] (int_3) at (3.5,3) {};
    \node[blacked] (int_4) at (4.5,4) {};

    \draw (1) to (int_1);
    \draw (2) to (int_1);
    \draw (3) to (int_1);
    \draw (4) to (int_1);
    \draw (5) to (int_2);
    \draw (6) to (int_3);
    \draw (7) to (int_4);
    \draw (int_1) to (int_2);
    \draw (int_2) to (int_3);
    \draw (int_3) to (int_4);
\end{scope}
\end{tikzpicture}}%
			\label{fig:construction-tree-incremental}%
		}%
  \caption[]{ %
    \textbf{Construction Trees.} 
    All three of our  proposed construction variants,
    \emph{basic} \subref{fig:construction-tree-basic},
    \emph{parallel} \subref{fig:construction-tree-parallel},
    and \emph{incremental} \subref{fig:construction-tree-incremental}
    construction, can be described via construction trees.
    Shown are example trees for each variant and $ d=7 $ objects.
  }%
  \label{fig:construction-trees}%
\end{figure}
At the heart of both extensions lie 
\emph{leaf-labeled}, \emph{ordered} trees with \emph{label set} $ [d] $,
further referred to as \emph{construction trees}, see 
\cref{fig:construction-trees}.
As described in \cref{sec:construction}, each \texttt{vertex} of a 
construction tree can be associated with a partial solution
(\texttt{vertex.solution}) for the objects labeled by its descendant 
leaves.
While leaves are always associated with the trivial solution
$ \SV^p = \left\{ \{i\} \,|\, i\in V^p \right\} $
of the object they label $ V^p $, $ p\in[d] $, 
solutions are generally the result of (matching and)
merging the solutions of their children.
Our \emph{parallel construction} limits itself to \emph{binary} construction
trees (see \cref{fig:construction-trees}),
which is why it can be implemented with only a GM solver,
see \cref{alg:parallel-construction} for pseudocode.
The basic construction \cref{alg:construction} can be identified as the 
special case of binary construction trees with height $ d-1 $,
see \cref{fig:construction-trees}.
It allows for no parallelization as only matchings for solutions of vertices 
at the same level of a construction tree can be computed in parallel.
\emph{Arbitrary} construction trees are the idea behind 
\emph{incremental construction}, where solutions for vertices with 
more than two children require an \emph{MGM solver},
see \cref{fig:construction-trees}.
However, in \cref{sec:construction} and \cref{sec:experiments}, we limit 
ourselves to trees with only one such vertex at level 2,
see \cref{fig:construction-trees}.
A promising use case is to use a (potentially) better MGM solver for the first
few objects to warm-start the construction with a better partial solution.

\begin{algorithm}[H]
  \addtolength{\hsize}{1.5em}%
  \DontPrintSemicolon
  \SetKwFor{ForParallel}{for}{do in parallel}{endfor}
  \SetInd{0.5em}{0.5em}
  \KwIn{Binary Construction Tree \texttt{T}}
  \For{$ l\in\{\text{\texttt{T.height}, \text{\texttt{T.height}-1, \dots, 1}\}} $}{
    \ForParallel{\texttt{vertex} in \texttt{T.level}[$ l $]}{
      $ \SA \leftarrow\text{\texttt{vertex.left.solution}}$ \;
      $ \SB \leftarrow\text{\texttt{vertex.right.solution}} $ \;
      $ E \leftarrow $ Solve GM with costs $ C^{\SA,\SB} $ from \cref{eq:parallel-costs} \;
      \texttt{vertex.solution} $ \leftarrow \merge(\SA, \SB; E)$ \;
    }
  }
  \caption{Parallel solution construction.}
  \label{alg:parallel-construction}
\end{algorithm}

%

\section{GM local search: Details}
\label{suppl:qap-local-search}
%
\cref{alg:parallel-gm-ls} provides pseudocode for the \emph{parallel} GM local search.

\begin{algorithm}[H]
  \addtolength{\hsize}{1.5em}%
  \DontPrintSemicolon
  \SetKwFor{ForParallel}{for}{do in parallel}{endfor}
  \SetInd{0.5em}{0.5em}
  \KwIn{Solution $ \SQ\in\BQ $}
  \While{stopping criterion not met}{
    \ForParallel{$ p\in[d] $}{
      $ \SQ' \leftarrow \{Q\setminus V^p \,|\, Q\in\SQ\} $ \;
      $ E^p \leftarrow$ Solve GM with costs $ C^{p,\SQ'} $ from Eq. \eqref{eq:basic-costs} \;
      $ c^p \leftarrow C(\merge(V^p,\SQ'; E^p)) $ \;
    }
    \For{$ p\in[d] $ in ascending order of $ c^p $}{
      $ \SQ' \leftarrow \{Q\setminus V^p \,|\, Q\in\SQ\} $ \;
      $ \SQ'' \leftarrow \merge(V^p, \SQ'; E^p) $ \;
      \If{$ C ( \SQ'' ) < C ( \SQ ) $}{
        $ \SQ \leftarrow \SQ''$
      }
    }
  }
  \caption{Parallel GM local search.}
  \label{alg:parallel-gm-ls}
\end{algorithm}

%
\section{Swap local search: Details}
\label{suppl:swap-local-search}

In the following, we derive the swap energy change formula \cref{eq:single-swap-energy}. The derivation uses a decomposition of a double sum over a subset of the summation domain as specified in \cref{lemma:double-sum-decomposition}.
\begin{lemma} \label{lemma:double-sum-decomposition}
Let $ S $ be a finite set and $ f: S \times S \to \BR $ a real valued function defined on the cartesian product $ S \times S $.
For the double sum over $ S $, we have the following decomposition \wrt a subset $ T\subset S $
\begin{equation}
\sum_{a,b \in S} f(a,b) = \sum_{a,b \in T} f(a,b) + \sum_{\substack{a \in T\\ b \in S\setminus T}} \left( f(a,b) + f(b,a) \right) + \sum_{a, b \in S\setminus T} f(a,b)
  \label{eq:double-sum-decomposition}
\end{equation}
\end{lemma}
\begin{proof}
  Fix $ a\in S $, then we have by commutativity and associativity of the sum
\begin{equation}
  \sum_{b\in S} f(a,b) = \sum_{b\in T} f(a,b) + \sum_{b \in S\setminus T} f(a,b) \ .
  \label{eq:double-sum-decomposition-proof-1}
\end{equation}
  Applying this relation twice to the left hand side of \cref{eq:double-sum-decomposition} and using again the commutativity and associativity of the sum after renaming summation indices yields the right hand side.
\end{proof}

Now, consider a single swap (\cref{sec:swap-local-search}) performed over a fixed object $p \in [d]$ and two cliques $Q,R \in \SQ$, yielding a new solution $\SQ^\prime$.
First, we rewrite the MGM objective \st the cost terms involving the fixed object $ p\in [d]  $ and cliques $ Q, R \in \SQ $ can be singled out, see \cref{lemma:mgm-objective-for-swap}.
\begin{lemma} \label{lemma:mgm-objective-for-swap}
  Let $ \SQ\in \BQ$ be a solution. Fix an object $ p\in [d] $ and two cliques $ Q, R \in \SQ $.
  We can rewrite the MGM objective \eqref{eq:MGM} as follows 
  \begin{equation}
    C(\SQ) = \sum_{q\in [d] \setminus \left\{ p \right\}} C^{p,q}_{\left\{ Q, R \right\}} (\SQ) + \FC^{p}_{\lnot \left\{ Q,R  \right\}}(\SQ) + \FC^{\lnot p}(\SQ) \ ,
    \label{eq:mgm-objective-for-swap}
  \end{equation}
  where the term $ \FC^{p}_{\lnot \left\{ Q, R \right\}} (\SQ) $ does not depend on the cliques $ Q, R \in \SQ $, the term $ \FC^{\lnot p} (\SQ) $ does not depend on the object $ p \in [d] $, and all terms involving the fixed object or cliques are given by
  \begin{equation}
    C^{p,q}_{\left\{ Q, R \right\}} (\SQ) = \sum_{S, T \in \left\{ Q, R \right\}} C^{p, q} (S, T) + \sum_{\substack{S \in \{Q, R\} \\ T \in \SQ \setminus \{Q, R\}}} \left( C^{p,q}(S,T) + C^{p, q} (T, S) \right)
    \label{eq:mgm-objective-for-swap-1}
  \end{equation}
  with the symmetrized extended costs 
  \begin{equation}
    C^{p, q} (S, T) 
    \coloneqq 
    \left\{ 
      \begin{array}{ ll }
        C^{p,q}_{ S^p S^q, T^p T^q}&; p<q \wedge p, q\in D(S) \cap D(T) \\
        C^{q,p}_{ S^q S^p, T^q T^p}&; q<p \wedge p, q\in D(S) \cap D(T) \\
        0&; \text{ else}\\
      \end{array}
    \right\} \ .
    \label{eq:mgm-objective-for-swap-2}
  \end{equation}
\end{lemma}
\begin{proof}
  The MGM objective \cref{eq:MGM} is given by 
  \begin{equation}
    C(\SQ) 
    \coloneqq \sum_{ S, T\in\SQ} \ \sum_{\substack{u,v\in D( S)\cap D( T) \\ u < v}} C^{u,v}_{ S^u S^v, T^u T^v}  \ .
    \label{eq:mgm-objective-for-swap-proof-1}
  \end{equation}
  We will rewrite it as two double sums over the objects $ [d]  $ and cliques $ \SQ $, \st, we can single out the fixed object $ p\in [d] $ and cliques $ Q, R\in \SQ $ by applying \cref{lemma:double-sum-decomposition} twice.
  First, rewrite the MGM objective as two double sums, \ie, 
  \begin{equation}
    C(\SQ) 
    = \sum_{S, T \in \SQ} \ \sum_{u, v \in [d]} \tilde{C}^{u, v} (S, T) 
    \label{eq:mgm-objective-for-swap-proof-2}
  \end{equation}
  where $ \tilde{C}^{u, v} (S, T) $ are extended costs defined by
  \begin{equation}
    \tilde{C}^{u, v} (S, T) \coloneqq
    \left\{ 
      \begin{array}{ ll }
        C^{u,v}_{ S^u S^v, T^u T^v}&; u<v \wedge u, v\in D(S) \cap D(T) \\
        0&; \text{ else}\\
      \end{array}
    \right\} 
    \label{eq:mgm-objective-for-swap-proof-3}
  \end{equation}
We further define $ C^{u, v} (\SQ) \coloneqq \sum_{S, T\in \SQ} \tilde{C}^{u, v} \left( S, T \right) $ as all costs involving the two objects $ u,v \in [d] $.
Splitting the double sum over $ [d] $ \wrt the subset $ \left\{ p \right\} \subset [d] $ according to \cref{lemma:double-sum-decomposition} yields
\begin{equation}
  C(\SQ) 
= \underbrace{C^{p, p} (\SQ)}_{=0}
+ \sum_{v\in [d]\setminus \left\{ p \right\}} \left( C^{p, v} (\SQ) + C^{v, p} (\SQ) \right) 
+ \underbrace{\sum_{u, v \in [d] \setminus \left\{ p \right\}} C^{u, v} (\SQ)}_{\eqqcolon \FC^{\lnot p} (\SQ)}
  \label{eq:mgm-objective-for-swap-proof-4}
\end{equation}
where $ \FC^{\lnot p} (\SQ) $ only sums terms that do not depend on the object $ p \in [d] $.
Now, the symmetrized extended costs $  C^{u, v} (S, T) $ ($= \tilde{C}^{u,v} (S, T) + \tilde{C}^{v, u} (S, T) $) from \cref{eq:mgm-objective-for-swap-2} allow to write the symmetric summand of \cref{eq:mgm-objective-for-swap-proof-4} as a double sum over all cliques $ \SQ $.
Again, splitting this double sum \wrt the fixed cliques $ \left\{ Q, R \right\} $ yields according to \cref{lemma:double-sum-decomposition}
\begin{equation}
\begin{aligned}
  C^{p, v} (\SQ) + C^{v, p} (\SQ) 
&= \sum_{S, T \in \SQ} C^{p, v} (S, T) \\
&= 
\underbrace{
\sum_{S, T \in \left\{ Q, R \right\}} C^{p, v} (S, T) 
+ \sum_{\substack{S \in \{Q, R\} \\ T \in \SQ \setminus \{Q, R\}}} \left( C^{p,v}(S,T) + C^{p, v} (T, S) \right)
}_{\eqqcolon C^{p, v}_{\left\{ Q, R \right\}} (\SQ)}
+ \underbrace{\sum_{\substack{S, T \in \SQ \setminus \{Q, R\} \\ \phantom{T \in \SQ \setminus \{Q, R\}}}} C^{p, v} (S, T)}_{\eqqcolon \FC^{p, v}_{\lnot \left\{ Q, R \right\}} (\SQ)}  \ .
  \label{eq:mgm-objective-for-swap-proof-6}
\end{aligned}
\end{equation}
Finally, substituting \cref{eq:mgm-objective-for-swap-proof-6} into \cref{eq:mgm-objective-for-swap-proof-4} yields
\begin{equation}
  C(\SQ) 
= \sum_{v\in [d]\setminus \left\{ p \right\}} C^{p, v}_{\left\{ Q, R \right\}} (\SQ)
+ \underbrace{\sum_{v\in [d]\setminus \left\{ p \right\}} \FC^{p, v}_{\lnot \left\{ Q, R \right\}} (\SQ)}_{\eqqcolon \FC^{p}_{\lnot \left\{ Q, R \right\}} (\SQ)}
+ \FC^{\lnot p} (\SQ) \ ,
  \label{eq:mgm-objective-for-swap-proof-7}
\end{equation}
where $ \FC^{p}_{\lnot \left\{ Q, R \right\}} (\SQ) $ only sums terms that do not depend on the cliques $ Q, R \in \SQ $.
\end{proof}
The decomposition in \cref{lemma:mgm-objective-for-swap} allows us to derive the objective change induced by a single swap as described in \cref{sec:swap-local-search}, see \cref{thm:swap-formula}.
\begin{theorem} \label{thm:swap-formula}
  Consider a single swap performed over a fixed object $p \in [d]$ and two cliques $Q,R \in \SQ$, yielding a new solution $\SQ^\prime$ and assume the definitions of \cref{lemma:mgm-objective-for-swap}. 
  The objective value change induced by this single swap decomposes into a sum of objective value changes between the fixed object $ p\in [d] $ and all other objects $ q\in [d] \setminus \left\{ p \right\} $  , \ie,
  \begin{equation}
    C(\SQ^\prime) - C(\SQ) 
    = \sum_{ q \in [d] \setminus \{p\} } \delta C^{p,q}_{\textrm{swap}}(Q, R) 
    \quad \text{with} \quad \delta C^{p,q}_{\textrm{swap}}(Q, R) =  C^{p,q}_{\left\{ Q', R' \right\}}(\SQ') - C^{p,q}_{\left\{ Q, R \right\}}(\SQ)  \ .
    \label{eq:swap-formula}
  \end{equation}
\end{theorem}
\begin{proof}
  According to \cref{lemma:mgm-objective-for-swap}, we have
  \begin{equation}
    C(\SQ^\prime) - C(\SQ) 
    = \sum_{q\in [d] \setminus \left\{ p \right\}} \left( C^{p,q}_{\left\{ Q', R' \right\}} (\SQ') - C^{p,q}_{\left\{ Q, R \right\}} (\SQ) \right)
    + \underbrace{\FC^{p}_{\lnot \left\{ Q',R'  \right\}}(\SQ') - \FC^{p}_{\lnot \left\{ Q,R  \right\}}(\SQ)}_{= 0}
    + \underbrace{\FC^{\lnot p}(\SQ') - \FC^{\lnot p}(\SQ)}_{=0} \ ,
    \label{eq:swap-formula-1}
  \end{equation}
  where the last two terms vanish because the swap affects a cost term only if it involves both the fixed object $p \in [d]$ and cliques $Q,R \in \SQ$ (or $Q',R' \in \SQ'$), which does not hold for $ \FC^{p}_{\lnot \left\{ Q, R \right\}}(\SQ) $ (or $ \FC^{p}_{\left\{ Q', R' \right\}}(\SQ') $) and $ \FC^{\lnot p} (\SQ) $ (or $ \FC^{\lnot p} (\SQ') $).
\end{proof}

\section{Incomplete to Complete Transformation: Details}
\label{suppl:details_incomplete_to_complete_transformation}

\subsection{Proof for polynomial transformation from incomplete to complete MGM}
\label{suppl:incomplete-transformation}
Before we prove the polynomial-time transformation from incomplete
to complete MGM, we formalize complete MGM analogously to incomplete MGM in 
\cref{sec:definition}, denoting complete pendants to 
incomplete quantities with a tilde, \eg, $ V^p $ and $ \tV^p $.
Instances, hereafter referred to as problems, of incomplete (or complete)
problems are fully defined via their objects $ V^p $, $ p\in[d] $, 
(or $ \tV^p $, $ p\in[\td] $,) and costs $ C^{p,q} $, $ p,q\in[d] $, 
(or $ \tC^{p,q} $, $ p,q\in[\td] $,)
with the difference that objects of the complete instance must have 
equal cardinalities $ \tV^p=\tV^q\eqqcolon\tn \ \ \forall p,q\in[\td] $.
We denote the set of feasible vertex partitions or solutions to the complete 
problem with $ \tBQ $.
All cliques $ \tQ\in\tSQ $ of solutions $ \tSQ\in\tBQ $ to the complete 
problem must contain \emph{exactly one} element per object 
$ p\in[\td]  $, \ie, $ |\tQ\cap\tV^p|=1  $, which implies that 
cliques are \emph{maximal} $ D(\tQ)=[\td] $.
Furthermore, since all vertices must be matched, solutions $ \tSQ\in\tBQ $
contain exactly $ \tn $ cliques $ |\tSQ|=\tn $.
The objective for complete MGM is defined in analogy to Eq. \eqref{eq:MGM}
\begin{equation}
  \min_{\tSQ\in\tBQ}
  \left[ 
    \tC(\tSQ) \coloneqq
    \sum_{ \tQ, \tR\in\tSQ}
    \sum_{p,q\in [\td]}
    \tC^{p,q}_{ \tQ^p \tQ^q, \tR^p \tR^q}
  \right] \,.
  \label{eq:complete-MGM}
\end{equation}
\begin{proof}[Proof of \cref{thm:transformation}]
As described in Eq. \eqref{eq:complete-objects} and \eqref{eq:complete-costs},
given an incomplete problem with objects $ V^p $, $ p\in[d] $, and costs
$ C^{p,q} $, $ p,q\in[d] $, we construct a complete problem with 
objects 
\begin{equation}
  \widetilde{V}^{p} = V^p \cup \Delta^p, \quad  p \in[d] \,,
  \label{eq:suppl-complete-objects}
\end{equation}
and costs
\begin{equation}
  \widetilde{C}^{p,q}_{is,jt} =
	\begin{cases}
    C^{p,q}_{is,jt} &, \text{if $ i,s,j,t\in V  $ }\\
		0 &, \text{else}
	\end{cases} \,,
  \label{eq:suppl-complete-costs}
\end{equation}
where $ \Delta^p  $, $ p\in[d] $ are \emph{dummy vertices} \st 
$ |\Delta^p|=|V|-|V^p| $, see \cref{fig:transformation}.
\begin{figure*}
  \centering
  \scalebox{0.85}{\begin{tikzpicture}[
  vertex/.style = {shape=circle,draw, inner sep=0.04cm},
  blacked/.style = {shape=circle,draw,fill=black,text=black, inner sep=0.06cm},
  edge/.style = {-, dashed, cvprgray},
  greenvertex/.style = {draw=cvprgreen,shape=circle, text=black, line width=0.3mm, inner sep=0.04cm},
  bluevertex/.style = {draw=cvprblue,shape=circle, text=black, line width=0.3mm, inner sep=0.04cm},
  orangevertex/.style = {draw=cvprorange,shape=circle, text=black, line width=0.3mm, inner sep=0.04cm},
  greendummy/.style = {regular polygon, regular polygon sides=4, draw=cvprgreen,  line width=0.3mm, inner sep=-0.01cm},
  bluedummy/.style = {regular polygon,  regular polygon sides=4, draw=cvprblue,   line width=0.3mm, inner sep=-0.01cm},
  orangedummy/.style = {regular polygon,regular polygon sides=4, draw=cvprorange, line width=0.3mm, inner sep=-0.01cm},
]

      \node[bluevertex] (1_p) at  (-0.5,0.5) {$ 1 $};
      \node[bluevertex] (2_p) at  ( 0,0) {$ 2 $};

      \node[greenvertex] (1_r) at  (0  ,1.5) {$ 1 $};
      \node[greenvertex] (2_r) at  (0.5,1.5) {$ 2 $};
      \node[greenvertex] (3_r) at  (1  ,1.5) {$ 3 $};

      \node[orangevertex] (1_q) at  (1.5,0.5) {$ 1 $};
      \node[orangevertex] (2_q) at  (1,0) {$ 2 $};

      \node (V_p) at (-0.7, -0.2) {$V^{p}$};
      \node (V_r) at (0.5, 2) {$V^{r}$};
      \node (V_q) at (1.7, -0.2) {$V^{q}$};

      \node[draw, rectangle, minimum height=0.6cm, minimum width=1.5cm, rotate=0] (box_r) at (0.5,1.5) {};
      \node[draw, rectangle, minimum height=0.6cm, minimum width=1.25cm, rotate=-45]  (box_p) at (-0.25,0.25) {};
      \node[draw, rectangle, minimum height=0.6cm, minimum width=1.25cm, rotate=45] (box_q) at (1.25,0.25) {};

      \node (incomplete_prob_to_sol_top_anchor) at (0.5, -1.15) {};
      \node (incomplete_prob_to_sol_bottom_anchor) at (0.5, -2.35) {};
      \draw[->, line width=0.3mm] (incomplete_prob_to_sol_bottom_anchor) to (incomplete_prob_to_sol_top_anchor);
      \node (SQ) at (0.5, -2.75) {Solution $ \SQ\in\BQ $};

      \begin{scope}[xshift=6.7cm,yshift=-0.75cm]
        \node[greendummy]   (d_p_5)   at  ( 0,0)     {$ \delta_5 $};
        \node[greendummy]   (d_p_4)   at  (-0.5,0.5) {$ \delta_4 $};
        \node[greendummy]   (d_p_3) at  (-1,1)       {$ \delta_3 $};
        \node[orangedummy]  (d_p_2) at  (-1.5,1.5)   {$ \delta_2 $};
        \node[orangedummy]  (d_p_1) at  (-2,2)       {$ \delta_1 $};
        \node[bluevertex]  (2_p) at  (-2.5,2.5)      {$ 2 $};
        \node[bluevertex]  (1_p) at  (-3,3)          {$ 1 $};

        \node[greendummy]   (d_q_5) at  (1,0)     {$ \delta_5 $};
        \node[greendummy]   (d_q_4) at  (1.5,0.5) {$ \delta_4 $};
        \node[greendummy]   (d_q_3) at  (2,1)     {$ \delta_3 $};
        \node[bluedummy]    (d_q_2) at  (2.5,1.5) {$ \delta_2 $};
        \node[bluedummy]    (d_q_1) at  (3,2)     {$ \delta_1 $};
        \node[orangevertex]  (2_q) at  (3.5,2.5) {$ 2 $};
        \node[orangevertex]  (1_q) at  (4,3) {$ 1 $};

        \node[greenvertex]   (1_r)   at  (-1.75,4)   {$ 1 $};
        \node[greenvertex]   (2_r)   at  (-1,4) {$ 2 $};
        \node[greenvertex]   (3_r)   at  (-0.25,4)    {$ 3 $};
        \node[orangedummy]   (d_r_1) at  (0.5,4)  {$ \delta_1 $};
        \node[orangedummy]   (d_r_2) at  (1.25,4)    {$ \delta_2 $};
        \node[bluedummy]     (d_r_3) at  (2,4)  {$ \delta_3 $};
        \node[bluedummy]     (d_r_5) at  (2.75,4)    {$ \delta_4 $};

        \node (tV_p) at (-2, 1) {$\tV^{p}$};
        \node (tV_r) at (0.5, 4.6) {$\tV^{r}$};
        \node (tV_q) at (3, 1) {$\tV^{q}$};

        \node[draw, rectangle, minimum height=0.6cm, minimum width=5.25cm, rotate=0] (box_r) at (0.5,4) {};
        \node[draw, rectangle, minimum height=0.65cm, minimum width=4.9cm, rotate=-45]  (box_p) at (-1.5,1.5) {};
        \node[draw, rectangle, minimum height=0.65cm, minimum width=4.9cm, rotate=45] (box_q) at (2.5,1.5) {};

        \node (dummy_label) at (1.65, 3.2) {$ \Delta^r $};
        \draw [decorate, decoration={brace, amplitude=5pt, mirror}] (0.25,3.6) -- (3,3.6);
        \node (vertex_label) at (-1, 3.2) {$ V^r $};
        \draw [decorate, decoration={brace, amplitude=5pt, mirror}] (-2,3.6) -- (0,3.6);

        \node (complete_prob_to_sol_top_anchor) at (0.5, -0.4) {};
        \node (complete_prob_to_sol_bottom_anchor) at (0.5, -1.6) {};
        \draw[->, line width=0.3mm] (complete_prob_to_sol_top_anchor) to (complete_prob_to_sol_bottom_anchor);
        \node (tSQ) at (0.5, -2) {Solution $ \tSQ\in\tBQ $};
      \end{scope}

      \node (left_anchor_arrow_top) at (1.5, 1.1) {};
      \node (right_anchor_arrow_top) at (3.9, 1.1) {};
      \draw[->, line width=0.3mm] (left_anchor_arrow_top) to (right_anchor_arrow_top);
      \node (arrow_text_top) at (2.7, 1.4) {$ \tV^p=V^p\cup \Delta^p $};

      \node (left_anchor_arrow_bottom) at (1.7, -2.75) {};
      \node (right_anchor_arrow_bottom) at (5.7,-2.75) {};
      \draw[->, line width=0.3mm] (right_anchor_arrow_bottom) to (left_anchor_arrow_bottom);
      \node (arrow_text_bottom) at (3.75, -2.45) {$ \SQ=\{\tQ\cap V \, | \, \tQ\in\tSQ\} $};
\end{tikzpicture}}
  \caption{ 
    \textbf{Reducing incomplete to complete MGM.} 
    To reduce an incomplete problem with objects $ V^p, V^q, V^r $ 
    to a complete problem, one adds dummy vertices $ \Delta^p $ to each object 
    $ V^p $.
    Solutions to the incomplete problem can be recovered from solutions
    to the complete one by removing dummy vertices from cliques.
    However, this transformation is, without further reasoning,
    \emph{impractical} since \emph{each} object $ \tV^p $ of the complete
    problem is the same size as the \emph{entire} incomplete problem $ |V| $. 
  }
  \label{fig:transformation}
\end{figure*}

This problem is indeed \emph{complete} 
since  $ |\tV^p| = |V^p| + |V| - |V^p| = |V| \ \ \forall p\in[d] $.
It is also constructed in \emph{polynomial time} 
since $ \sum_{p\in[d]} |V|-|V^p|=(d-1)|V| $ dummy vertices and 
$ \sum_{p\in[d]}\sum_{q\in[d]\setminus\{p\}} (|V|^4-|V^p|^2|V^q|^2)$ costs 
are added to the incomplete problem.
What remains to prove is that the optimal solution of the 
incomplete problem can be determined in polynomial time
by solving the complete one. This follows from the fact that we can
\emph{translate} solutions without changing 
their objective between the incomplete and complete problem by 
\emph{adding} and \emph{removing} dummy vertices.
A solution $ \tSQ\in\tBQ $ for the complete problem is
translated to the solution 
\begin{equation}
   \SQ \coloneqq \{ \tQ\cap V \, | \, \tQ\in\tSQ \} \,,
  \label{eq:translation-complete-to-incomplete}
\end{equation}
for the incomplete problem.
We relate the cliques of both solutions
via the \emph{well-defined} function 
$ \tau:\tSQ \rightarrow \SQ, \tQ\mapsto\tQ\cap V $.
The solution to the incomplete problem is actually a solution,
\ie, $ \SQ\in\BQ $, because it only contains \emph{non-dummy vertices} 
by construction and \emph{exhaustiveness}, \emph{pairwise disjointness}, and 
\emph{feasibility} carry over from the solution to the complete problem.
%
Moreover, both objectives are equal
\renewcommand{\arraystretch}{1.5}
\begin{equation}
  \begin{array}{ lcl }
      \tC(\tSQ) 
      &
      =
      & 
      \sum_{\tQ,\tR\in\tSQ}
      \sum_{p,q\in[d]}
      \tC^{p,q}_{\tQ^p\tQ^q,\tR^p\tR^q}
      \\
      &
      \stackrel{\text{\cref{eq:suppl-complete-costs}}}{=}
      & 
      \sum_{\tQ,\tR\in\tSQ}
      \sum_{p,q\in D(\tau(\tQ))\cap D(\tau(\tR))} 
      C^{p,q}_{\tau(\tQ)^p \tau(\tQ)^q, \tau(\tR)^p \tau(\tR)^q}
      \\
      &
      =
      & 
      \sum_{Q,R\in\SQ}
      \sum_{p,q\in D(Q)\cap D(R)}
      C^{p,q}_{Q^pQ^q,R^pR^q} 
      \\
      &
      =
      & 
      C(\SQ) \,.
  \end{array}
  \label{eq:translation-equal-costs}
\end{equation}
\begin{algorithm}[t]
\addtolength{\hsize}{1.5em}%
\DontPrintSemicolon
\KwIn{Incomplete Solution $ \SQ\in\BQ $}
  $
    \SQ
    \leftarrow
    \SQ\setminus\{\emptyset\}
  $\;
  \tcp{enumerate cliques}
  $
    \{Q_l\}_{l\in[\,|\SQ|\,]}
    \leftarrow
    \SQ
  $\;
  $
    \tSQ
    \leftarrow
    \{\tQ_l\}_{l\in[\, |V| \,]}
  $
  where 
  $ \tQ_l = Q_l $ if $ l\leq|\SQ| $ and $ \tQ_l=\emptyset $ else \;
  \For{$ p\in [d] $}{
    \tcp{enumerate dummy vertices}
    $
      \{\delta_l\}_{l\in[\,|V|-|V^p|\,]}
      \leftarrow
      \Delta^p
    $\;
    Reorder $ \tSQ $ \st $ |\tQ_l\cap V^p|=1 $ if $ l\leq|V^p| $\;
    \For{$ l\in  [\, |V|-|V^p| \,] $}{
      $ 
        \tQ_{l+|V^p|}
        \leftarrow
        \tQ_{l+|V^p|} \cup \delta_l
      $\;
    }
  }
\caption{Incomplete to complete solution translation}
\label{alg:translation}
\end{algorithm}

For the other direction, \cref{alg:translation} can 
be used to translate an incomplete solution $ \SR\in\BQ $ to a complete 
solution $ \tSR\in\tBQ $, which is indeed a solution because 
each dummy vertex is added to \emph{exactly one} (possibly empty) clique out of 
$ |V| $ in total, 
which means cliques are \emph{maximal} and \emph{exhaustiveness},
\emph{pairwise disjointness}, and \emph{feasibility} 
carry over from the solution to the incomplete problem.
Note, however, that this solution is generally not unique because
the enumeration of dummy vertices in \cref{alg:translation}
is an unspecified degree of freedom.
Now, it holds by construction that 
$ \SR = \{\tR\cap V \, | \, \tR\in\tSR\} $, hence, $ C(\SR)=\tC(\tSR) $ by 
Eq. \eqref{eq:translation-equal-costs}.
Therefore, translating any optimal solution to the complete problem
via Eq.~\eqref{eq:translation-complete-to-incomplete} yields an optimal 
solution to the incomplete one, which concludes the proof because 
this translation can be computed in polynomial time. 
\end{proof}
\subsection{Minimality \wrt Number of Dummy Vertices}
As discussed in \cref{sec:incomplete-transformation}, the number of dummy vertices 
required in \cref{eq:suppl-complete-objects} is $ d-1 $ times the size of the 
entire incomplete problem and yet \emph{minimal} because the incomplete 
solution not matching any vertex can only be obtained through a 
complete solution matching each non-dummy vertex to $(d-1)$ dummy vertices, see 
\cref{fig:minimality} for an example in the setting of \cref{fig:transformation}.
\begin{figure*}
  \centering
  \scalebox{0.85}{\begin{tikzpicture}[
  vertex/.style = {shape=circle,draw, inner sep=0.04cm},
  blacked/.style = {shape=circle,draw,fill=black,text=black, inner sep=0.06cm},
  edge/.style = {-, dashed, cvprgray},
  greenvertex/.style = {draw=cvprgreen,shape=circle, text=black, line width=0.3mm, inner sep=0.04cm},
  bluevertex/.style = {draw=cvprblue,shape=circle, text=black, line width=0.3mm, inner sep=0.04cm},
  orangevertex/.style = {draw=cvprorange,shape=circle, text=black, line width=0.3mm, inner sep=0.04cm},
  greendummy/.style = {regular polygon, regular polygon sides=4, draw=cvprgreen,  line width=0.3mm, inner sep=-0.01cm},
  bluedummy/.style = {regular polygon,  regular polygon sides=4, draw=cvprblue,   line width=0.3mm, inner sep=-0.01cm},
  orangedummy/.style = {regular polygon,regular polygon sides=4, draw=cvprorange, line width=0.3mm, inner sep=-0.01cm},
  blacksquare/.style = {draw=black, rectangle, minimum width=0.5cm, minimum height=0.5cm}
]

  \begin{scope}[scale=0.75]
        \node (V^p) at (0, 0)   {$ V^p $};
        \node (V^q) at (1, 0)   {$ V^q $};
        \node (V^r) at (2, 0)   {$ V^r $};
        \node (SQ) at (4.00, 0) {$ \SQ $};

        \node[bluevertex]  (1_p)   at (0, -1.0) {$ 1 $};
        \node[bluevertex]  (2_p)   at (0, -2.0) {$ 2 $};

        \node[orangevertex]  (1_q)   at (1, -3.0) {$ 1 $};
        \node[orangevertex]  (2_q)   at (1, -4.0) {$ 2 $};

        \node[greenvertex]  (1_r)   at (2, -5.0) {$ 1 $};
        \node[greenvertex]  (2_r)   at (2, -6.0) {$ 2 $};
        \node[greenvertex]  (3_r)   at (2, -7.0) {$ 3 $};

        \node[blacksquare]  (Q_1)   at (4.00, -1.0) {$ Q_1 $};
        \node[blacksquare]  (Q_2)   at (4.00, -2.0) {$ Q_2 $};
        \node[blacksquare]  (Q_3)   at (4.00, -3.0) {$ Q_3 $};
        \node[blacksquare]  (Q_4)   at (4.00, -4.0) {$ Q_4 $};
        \node[blacksquare]  (Q_5)   at (4.00, -5.0) {$ Q_5 $};
        \node[blacksquare]  (Q_6)   at (4.00, -6.0) {$ Q_6 $};
        \node[blacksquare]  (Q_7)   at (4.00, -7.0) {$ Q_7 $};

        \draw[dashed, cvprgray] (0,-1) ellipse (0.5cm and 0.38cm);
        \draw[dashed, cvprgray] (0,-2) ellipse (0.5cm and 0.38cm);
        \draw[dashed, cvprgray] (1,-3) ellipse (0.5cm and 0.38cm);
        \draw[dashed, cvprgray] (1,-4) ellipse (0.5cm and 0.38cm);
        \draw[dashed, cvprgray] (2,-5) ellipse (0.5cm and 0.38cm);
        \draw[dashed, cvprgray] (2,-6) ellipse (0.5cm and 0.38cm);
        \draw[dashed, cvprgray] (2,-7) ellipse (0.5cm and 0.38cm);

        \node[draw, rectangle, minimum width=0.65cm, minimum height=5.25cm, rotate=0] (box_p) at (0,-4) {};
        \node[draw, rectangle, minimum width=0.65cm, minimum height=5.25cm, rotate=0] (box_p) at (1,-4) {};
        \node[draw, rectangle, minimum width=0.65cm, minimum height=5.25cm, rotate=0] (box_p) at (2,-4) {};
        \node[draw, rectangle, minimum width=0.85cm, minimum height=5.25cm, rotate=0, dashed] (box_p) at (4.00,-4) {};

        \draw[dashed, shorten <= 5pt, cvprgray] (1_p) to (Q_1);
        \draw[dashed, shorten <= 5pt, cvprgray] (2_p) to (Q_2);
        \draw[dashed, shorten <= 5pt, cvprgray] (1_q) to (Q_3);
        \draw[dashed, shorten <= 5pt, cvprgray] (2_q) to (Q_4);
        \draw[dashed, shorten <= 5pt, cvprgray] (1_r) to (Q_5);
        \draw[dashed, shorten <= 5pt, cvprgray] (2_r) to (Q_6);
        \draw[dashed, shorten <= 5pt, cvprgray] (3_r) to (Q_7);

        \node (left_anchor_arrow_bottom)  at (4.6, -4.00) {};
        \node (right_anchor_arrow_bottom) at (11, -4.00) {};
        \draw[->, line width=0.3mm] (right_anchor_arrow_bottom) to (left_anchor_arrow_bottom);
        \node (arrow_text_bottom) at (7.8, -3.45) {$ \SQ=\{\tQ\cap V \, | \, \tQ\in\tSQ\} $};
  \end{scope}

  \begin{scope}[scale=0.75, xshift=12cm]
        \node (tV^p) at (0, 0) {$ \tV^p $};
        \node (tV^q) at (1, 0) {$ \tV^q $};
        \node (tV^r) at (2, 0) {$ \tV^r $};
        \node (tSQ) at (4.00, 0) {$ \tSQ $};

        \node[bluevertex]  (1_p)   at (0, -1.0) {$ 1 $};
        \node[bluevertex]  (2_p)   at (0, -2.0) {$ 2 $};
        \node[orangedummy] (d_p_1) at (0, -3.0) {$ \delta_1 $};
        \node[orangedummy] (d_p_2) at (0, -4.0) {$ \delta_2 $};
        \node[greendummy]  (d_p_3) at (0, -5.0) {$ \delta_3 $};
        \node[greendummy]  (d_p_4) at (0, -6.0) {$ \delta_4 $};
        \node[greendummy]  (d_p_5) at (0, -7.0) {$ \delta_5 $};

        \node[bluedummy]     (d_q_1) at (1, -1.0) {$ \delta_1 $};
        \node[bluedummy]     (d_q_2) at (1, -2.0) {$ \delta_2 $};
        \node[orangevertex]  (1_q)   at (1, -3.0) {$ 1 $};
        \node[orangevertex]  (2_q)   at (1, -4.0) {$ 2 $};
        \node[greendummy]    (d_q_3) at (1, -5.0) {$ \delta_3 $};
        \node[greendummy]    (d_q_4) at (1, -6.0) {$ \delta_4 $};
        \node[greendummy]    (d_q_5) at (1, -7.0) {$ \delta_5 $};

        \node[bluedummy]    (d_r_3) at (2, -1.0) {$ \delta_3 $};
        \node[bluedummy]    (d_r_4) at (2, -2.0) {$ \delta_4 $};
        \node[orangedummy]  (d_r_1) at (2, -3.0) {$ \delta_1 $};
        \node[orangedummy]  (d_r_2) at (2, -4.0) {$ \delta_2 $};
        \node[greenvertex]  (1_r)   at (2, -5.0) {$ 1 $};
        \node[greenvertex]  (2_r)   at (2, -6.0) {$ 2 $};
        \node[greenvertex]  (3_r)   at (2, -7.0) {$ 3 $};

        \node[blacksquare]  (tQ_1)   at (4.00, -1.0) {$ \tQ_1 $};
        \node[blacksquare]  (tQ_2)   at (4.00, -2.0) {$ \tQ_2 $};
        \node[blacksquare]  (tQ_3)   at (4.00, -3.0) {$ \tQ_3 $};
        \node[blacksquare]  (tQ_4)   at (4.00, -4.0) {$ \tQ_4 $};
        \node[blacksquare]  (tQ_5)   at (4.00, -5.0) {$ \tQ_5 $};
        \node[blacksquare]  (tQ_6)   at (4.00, -6.0) {$ \tQ_6 $};
        \node[blacksquare]  (tQ_7)   at (4.00, -7.0) {$ \tQ_7 $};

        \draw[dashed, cvprgray] (1,-1) ellipse (2cm and 0.38cm);
        \draw[dashed, cvprgray] (1,-2) ellipse (2cm and 0.38cm);
        \draw[dashed, cvprgray] (1,-3) ellipse (2cm and 0.38cm);
        \draw[dashed, cvprgray] (1,-4) ellipse (2cm and 0.38cm);
        \draw[dashed, cvprgray] (1,-5) ellipse (2cm and 0.38cm);
        \draw[dashed, cvprgray] (1,-6) ellipse (2cm and 0.38cm);
        \draw[dashed, cvprgray] (1,-7) ellipse (2cm and 0.38cm);

        \node[draw, rectangle, minimum width=0.65cm, minimum height=5.25cm, rotate=0] (box_p) at (0,-4) {};
        \node[draw, rectangle, minimum width=0.65cm, minimum height=5.25cm, rotate=0] (box_p) at (1,-4) {};
        \node[draw, rectangle, minimum width=0.65cm, minimum height=5.25cm, rotate=0] (box_p) at (2,-4) {};
        \node[draw, rectangle, minimum width=0.85cm, minimum height=5.25cm, rotate=0, dashed] (box_p) at (4.00,-4) {};

        \draw[dashed, shorten <= 16pt, cvprgray] (d_r_3) to (tQ_1);
        \draw[dashed, shorten <= 16pt, cvprgray] (d_r_4) to (tQ_2);
        \draw[dashed, shorten <= 16pt, cvprgray] (d_r_1) to (tQ_3);
        \draw[dashed, shorten <= 16pt, cvprgray] (d_r_2) to (tQ_4);
        \draw[dashed, shorten <= 16pt, cvprgray] (1_r)   to (tQ_5);
        \draw[dashed, shorten <= 16pt, cvprgray] (2_r)   to (tQ_6);
        \draw[dashed, shorten <= 16pt, cvprgray] (3_r)   to (tQ_7);

  \end{scope}
\end{tikzpicture}}
  \caption{ 
    \textbf{Requiring all dummy vertices.} 
    In the transformation from \cref{thm:transformation}, to obtain the incomplete
    solution that does not match any vertex (left), 
    all dummy vertices are required by the complete solution (right).
    This example uses the setting of \cref{fig:transformation}.
    Solutions are illustrated identically to \cref{fig:gm-ls}.
  }
  \label{fig:minimality}
\end{figure*}

\FloatBarrier
\clearpage
\section{Hamming Distance as synchronization objective}
\label{suppl:hamm-dist}
Let $E \subset \overbar{E}$ be a possibly inconsistent multi-matching. 
Some synchronization works, \eg~\cite{bernard2019synchronisation,pachauri2013solving}, formulate the projection onto the set of cycle-consistent multi-matchings not by maximizing the overlap, or equivalently the intersection, as in Eq.~\eqref{eq:sync}, but instead by minimizing the \emph{Hamming distance}, or equivalently the \emph{symmetric difference}, \ie,
\begin{equation}\label{eq:suppl-sync}
  E^*\in \argmin_{ F } | E\Delta F| \qquad
  \text{s.t.}\  F\subset\overbar{ E} \text{ cycle-consistent } \,.
\end{equation}
As in the overlap formulation from Eq. \eqref{eq:sync-dense-costs}, the Hamming distance formulation from Eq. \eqref{eq:suppl-sync} can be written as a Multi-LAP over all objects $p\in[d]$ with costs
\begin{equation}
  \widetilde{M}^{p,q}_{is,is}
  =
  \left\{
  \begin{array}{ rl }
    -1, & \text{ if } is\in E\,, \\
    +1,  & \text{ else }\,.       \\
  \end{array}
  \right.
  \label{eq:suppl-sync-dense-costs}
\end{equation}
Indeed, for any feasible cycle-consistent multi-matching $F\subset \overbar{E}$, the corresponding Multi-LAP cost is given by
\begin{equation}
    C_\text{MLAP}(F) 
    = \sum_{e \in F} - I(e \in E) + I(e \notin E) 
    = -|F \cap E| + |F \setminus E|
    ,
\end{equation}
where $I$ is the indicator function. 
Adding the constant $ |E| $, which is independent of $ F $, yields
\begin{equation}
    |E| + C_\text{MLAP}(F) = |E| - |E \cap F| + |F \setminus E| = |E \setminus F| + |F \setminus E| = |E \Delta F| \,.
\end{equation}
Thus, minimizing the Multi-LAP with costs \cref{eq:suppl-sync-dense-costs} is equivalent to minimizing the symmetric difference synchronization objective \cref{eq:suppl-sync}.
In the complete case, the overlap \eqref{eq:sync} and Hamming distance \eqref{eq:suppl-sync} formulations coincide, since on the feasible set their objectives differ only by an additive constant and a negative multiplicative factor, \ie,
\begin{equation}
  |E \Delta F| = |E \setminus F| + |F \setminus E| = |E| - |E \cap F| + |F| - |E \cap F| = 2 (d^2 n_{\cdot} - |E\cap F|) = \mathrm{const.} - 2|E\cap F| \, ,
  \label{eq:suppl-overlap-hamming-distance-agree}
\end{equation}
where $ n_{\cdot} = |V^p| $ is the number keypoints in each object $ p\in[d] $.

\section{Experiment setup: Details}\label{suppl:experiment-setup}

\Subsubsection{Setup Details.}
All experiments in \cref{sec:comparison} were run with a single core on an AMD Milan EPYC 7513. Parallel methods in \cref{sec:ablation-study} were granted 10 processing cores.
As mentioned in \cref{sec:experiments}, we solve GM subproblems of \emph{GREEDA} using the state-of-the-art GM solver~\cite{hutschenreiter_fusionmoves_2021,fusionmovesprojectpage}.
We run it with the following otherwise default parameters: Batch size $ 10 $, $ 10 $ greedy generations per batch, number of batches $ 10 $. 
When calling the GM solver~\cite{hutschenreiter_fusionmoves_2021,fusionmovesprojectpage} directly from our \emph{pylibmgm} python package, the stopping criterion $ k $ should be set to a high number, i.e. $ k = 100000 $, to yield equivalent results.
Additionally, we start with a \emph{fixed} randomization seed $ 42 $, which turns the otherwise randomized solver~\cite{hutschenreiter_fusionmoves_2021,fusionmovesprojectpage} into a deterministic one. This is needed to separate the randomization effects of our construction heuristic (see \cref{sec:construction}) from those of the underlying GM solver. It follows, that we can attribute any randomization effects in our experiments to \emph{GREEDA} itself.

For all synchronization methods, the $ d(d-1)/2 $ GM subproblems arising in preprocessing are solved using the same solver~\cite{hutschenreiter_fusionmoves_2021,fusionmovesprojectpage} with identical settings. 
To ensure a fair comparison, each problem instance is solved only once, and the resulting solution is used as the common initialization for all algorithms.
%
%

\section{Detailed averaged synchronization results}
\label{suppl:detailed-sync-results}
We provide here detailed results of the synchronization algorithms considered in \cref{sec:synchronization-exp} for every problem instance.
We report the Multi-LAP (\emph{M-LAP}), eq.~\eqref{eq:sync-dense-costs}, and \emph{MGM}, eq.~\eqref{eq:MGM} objectives, as well as the number of forbidden matchings (\emph{\#forb.}) in the final solution and the runtime of each algorithm.
The \emph{preprocessing time}, the time required to calculate the cycle-inconsistent solution by solving the $ d(d-1)/2 $ independent GM subproblems, is given in the caption of each table.
On the \emph{worms} and \emph{worms-large} datasets, most synchronization algorithms were evaluated only on \emph{worms-10}, as they in general produce infeasible solutions with infinite cost.

\FloatBarrier
\begin{table*}
\centering
\caption{\textbf{Synchronization Results for synthetic complete-4}. Averaged over all instances. Preprocessing time: 0.04 s.}

\end{table*}

\FloatBarrier
\newpage
\section{Detailed objective values per instance}
\label{suppl:detailed-results}

In most cases the best and the second best results have been obtained by \emph{GREEDA (best)} and \emph{GREEDA (worst)} computed based on 10 runs of our sequential variant of \emph{GREEDA}. 
Note that these runs can be executed in parallel in the same time and even their sequential execution often takes less time then a single execution of the competing algorithms. 
This, in turn, means that achieving even \emph{GREEDA (best)} results is computationally cheaper than execution of the competing algorithms.

In the tables, \emph{Our-C} refers to the intermediate result of \emph{GREEDA}, obtained by our sequential construction heuristic of \cref{sec:construction} and  serves as a point of reference for the influence of the \emph{GM local search} (see \cref{sec:construction}) and \emph{SWAP local search} (see \cref{sec:swap-local-search}) on the final results of \emph{GREEDA}.

The algorithms \emph{DS*}~\cite{bernard2018ds} and the \emph{pygmtools}~\cite{wang2024pygmtools} implementations of \emph{Cao}~\cite{yan2015multi} and \emph{Floyd}~\cite{jiang2020unifying} are designed for complete MGM problems and were therefore evaluated only on the \emph{synthetic complete/density/deform} datasets. 
On all other datasets, transforming the problem into a complete MGM as described in \cref{sec:incomplete-transformation,suppl:details_incomplete_to_complete_transformation} makes these algorithms impractical, both in terms of runtime and memory required.
Furthermore, \emph{Cao} and \emph{Floyd} do not guarantee cycle-consistent solutions. Results that are cycle-inconsistent are marked with a dagger ($\dagger$) and excluded from the selection of the best result, which is indicated in bold.

On the \emph{worms} and \emph{worms-large} datasets, most synchronization algorithms were evaluated only on \emph{worms-10}, as they in general produce infeasible solutions with infinite cost.

Lastly, we report out of memory (OOM) if an algorithm exceeds the evaluation machine's memory limit of 236~GB.



\FloatBarrier
\clearpage

\bgroup
\setlength{\tabcolsep}{5pt}

\begin{table*}
	\centering
	\caption{\textbf{Synthetic complete (4 objects)}. Objective values per instance.}
	\begin{adjustbox}{width=\textwidth}

\stopcontents[appendix-toc]

\else
    \makeatletter
    \newcommand{\suppstub}[3]{%
        \protected@write\@auxout{}{\string\newlabel{#1}{{#2}{S1}}}%
        \protected@write\@auxout{}{\string\newlabel{#1@cref}{{[appendix][#3][]#2}{[1][1][]S1}{}{}{}}}%
    }
    \suppstub{suppl:construction}{A}{1}
    \suppstub{suppl:qap-local-search}{B}{2}
    \suppstub{suppl:swap-local-search}{C}{3}
    \suppstub{suppl:details_incomplete_to_complete_transformation}{D}{4}
    \suppstub{suppl:hamm-dist}{E}{5}
    \suppstub{suppl:experiment-setup}{F}{6}
    \suppstub{suppl:detailed-sync-results}{G}{7}
    \suppstub{suppl:detailed-results}{H}{8}
    \makeatother
\fi

\end{document}